\theoremstyle{definition}
\theoremstyle{plain}
\newtheorem{theorem}{Theorem}
\newtheorem{proposition}{Proposition}
\newtheorem{corollary}{Corollary}
\newtheorem{lemma}{Lemma}
\theoremstyle{definition}
\theoremstyle{remark}
\newcommand{\mE}{\mathcal{E}}
\newcommand{\barf}{\bar{f}_{\lambda}}
\renewcommand{\hat}[1]{\widehat{#1}}
\newcommand{\mnorm}[1]{\left\vert\kern-1.5pt\left\vert\kern-1.5pt\left\vert #1\right\vert\kern-1.5pt\right\vert\kern-1.5pt\right\vert}
\begin{document}

\begin{center}

{\bf{\Large{On the Feasibility of Distributed Kernel Regression for Big Data}}}

\vspace*{.3in}

\begin{tabular}{cccc}
 Chen Xu$^1$ &  Yongquan Zhang$^2$ & Runze Li$^1$ \\
 \texttt{cux10@psu.edu} &
  \texttt{zyqmath@163.com}  &
   \texttt{rli@stat.psu.edu}
\end{tabular}

\vspace*{.2in}

\begin{tabular}{cc}
  $^1$ The Methodology Center &  $^2$   Department of Mathematics \\
  The Pennsylvania State University &  China Jiliang University  \\
 State College, PA, USA, 16801 & Hangzhou, Zhejiang, China, 310018
\end{tabular}

\vspace*{.2in}
\today

% \tableofcontents

\begin{abstract}
\noindent
In modern scientific research, massive datasets with huge numbers of observations are frequently encountered. To facilitate the computational process, a divide-and-conquer scheme is often used for the analysis of big data. In such a strategy, a full dataset is first split into several manageable segments; the final output is then averaged from the individual outputs of the segments.
Despite its popularity in practice, it remains largely unknown that whether such a distributive strategy provides valid theoretical inferences to the original data. In this paper, we address this fundamental issue for the distributed kernel regression (DKR), where the algorithmic feasibility is measured by the generalization performance of the resulting estimator. To justify DKR, a uniform convergence rate is needed for bounding the generalization error over the individual outputs, which brings new and challenging issues in the big data setup.
Under mild conditions, we show that, with a proper number of segments, DKR leads to an estimator that is generalization consistent to the unknown regression function. The obtained results justify the method of DKR and shed light on the feasibility of using other distributed algorithms for processing big data. The promising preference of the method is supported by both simulation and real data examples.

\end{abstract}
\end{center}
{\bf Keywords:} Distributed Algorithm, Kernel Regression, Big Data, Learning Theory, Generalization Bounds.

\section{Introduction}
The rapid development in data generation and acquisition
has made a profound impact on knowledge discovery.
Collecting data with unprecedented sizes and complexities is now feasible in many scientific fields.
For example, a satellite takes thousands of high resolution images per day; a Walmart store has millions of transactions per week; and Facebook generates billions of posts per month.
Such examples also occur in agriculture, geology, finance, marketing, bioinformatics, and Internet studies among others.
The appearance of big data brings great opportunities for extracting new information and discovering subtle patterns.
Meanwhile, their huge volume also poses many challenging issues to the traditional data analysis, where a dataset is typically processed on a single machine.
In particular, some severe challenges are from the computational aspect, where the storage bottleneck and algorithmic feasibility need to be faced.
Designing effective and efficient analytic tools for big data has been a recent focus in the statistics and machine learning communities \cite{wu2014data}.

In the literature, several strategies have been proposed for processing big data.
To overcome the storage bottleneck, \textit{Hadoop} system was developed to conduct distributive storage and parallel processing.
The idea of \textit{Hadoop} follows from a natural divide-and-conquer framework, where a large problem is divided into several manageable subproblems and the final output is obtained by combining the corresponding sub-outputs.
With the aid of \textit{Hadoop}, many machine learning methods can be re-built to their distributed versions for the big data analysis. %\cite{gillick2006mapreduce}
For examples, McDonald et al. \cite{mcdonald2010distributed} considered a distributed
training approach for structured perception, while Kleiner et al. \cite{kleiner2012big} introduced a distributed bootstrap method.
Recently, similar ideas have also been applied to statistical point estimation \cite{li2013statistical}, kernel ridge regression \cite{zhang2013divide}, matrix factorization \cite{mackey2011divide}, and principal component analysis \cite{zhao2012recursive}.

To better understand the divide-and-conquer strategy, let us consider an illustrative example as follows.
Suppose that a dataset consists of $N=1,000,000$ random samples $\{(x_i, y_i)\}_{i=1}^N \subset\mathbb{R}^{d}\times\mathbb{R}$ with dimension $d=100$.
We assume that the data follow from a linear model $y_{i} = x^{T}_{i}\beta + \varepsilon$ with a random noise $\varepsilon$. The goal of learning is to estimate the regression coefficient $\beta$. Let $Y=(y_{1}, \ldots, y_{N})^{T}$ be the $N$-dimensional response vector and $X=(x_{1}, \ldots, x_{N})^{T}$ be the $N \times d$ covariate matrix.
Apparently, the huge sample size of this problem makes the single-machine-based least squares estimate $\hat{\beta}=(X^{\top}X)^{-1}X^\top Y$ computationally costly.
Instead, one may first evenly distribute the $N$ samples into $m$ local machines and obtain $m$ sub-estimates $\hat{\beta}_{j}$ based on $m$ independent running.
The final estimate of $\beta$ can then be obtained by averaging the $m$ sub-estimates $\bar{\beta}=  \sum_{j=1}^{m} \hat{\beta}_{j}/m$.
Compared with the traditional method, such a distributive learning framework utilizes the computing power of multiple machines, which avoids the direct storage and operation on the original full dataset. We further illustrate this framework in Figure 1 and refer to it as a distributed algorithm.

\begin{figure}
\begin{center}
  % Requires \usepackage{graphicx}
  \includegraphics[width=350pt]{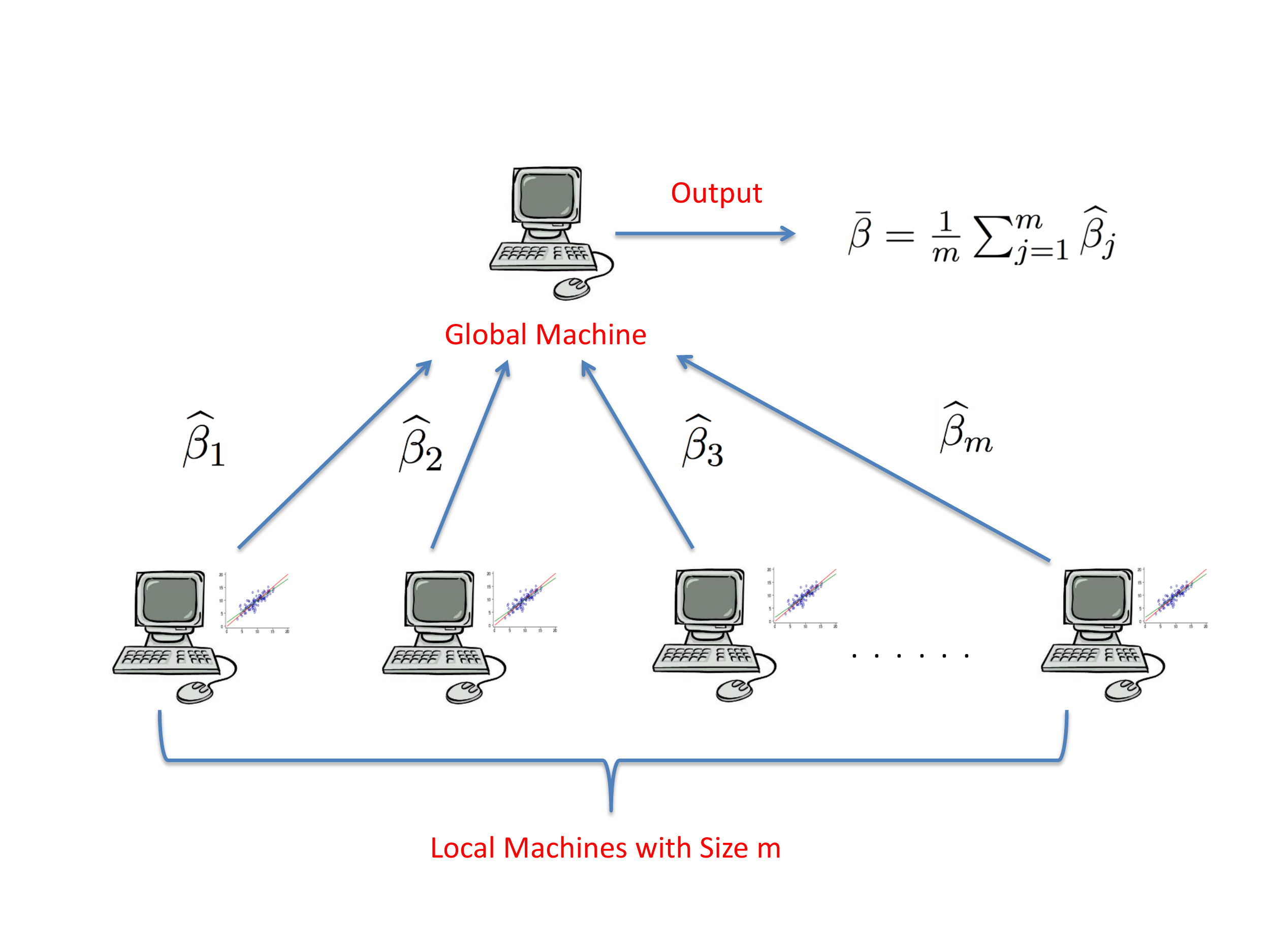}
  \end{center}
 \caption{A divide-and-conquer learning framework.}\label{figureDSRA}
\end{figure}

\begin{figure}
\begin{center}
  % Requires \usepackage{graphicx}
  \includegraphics[width=300pt]{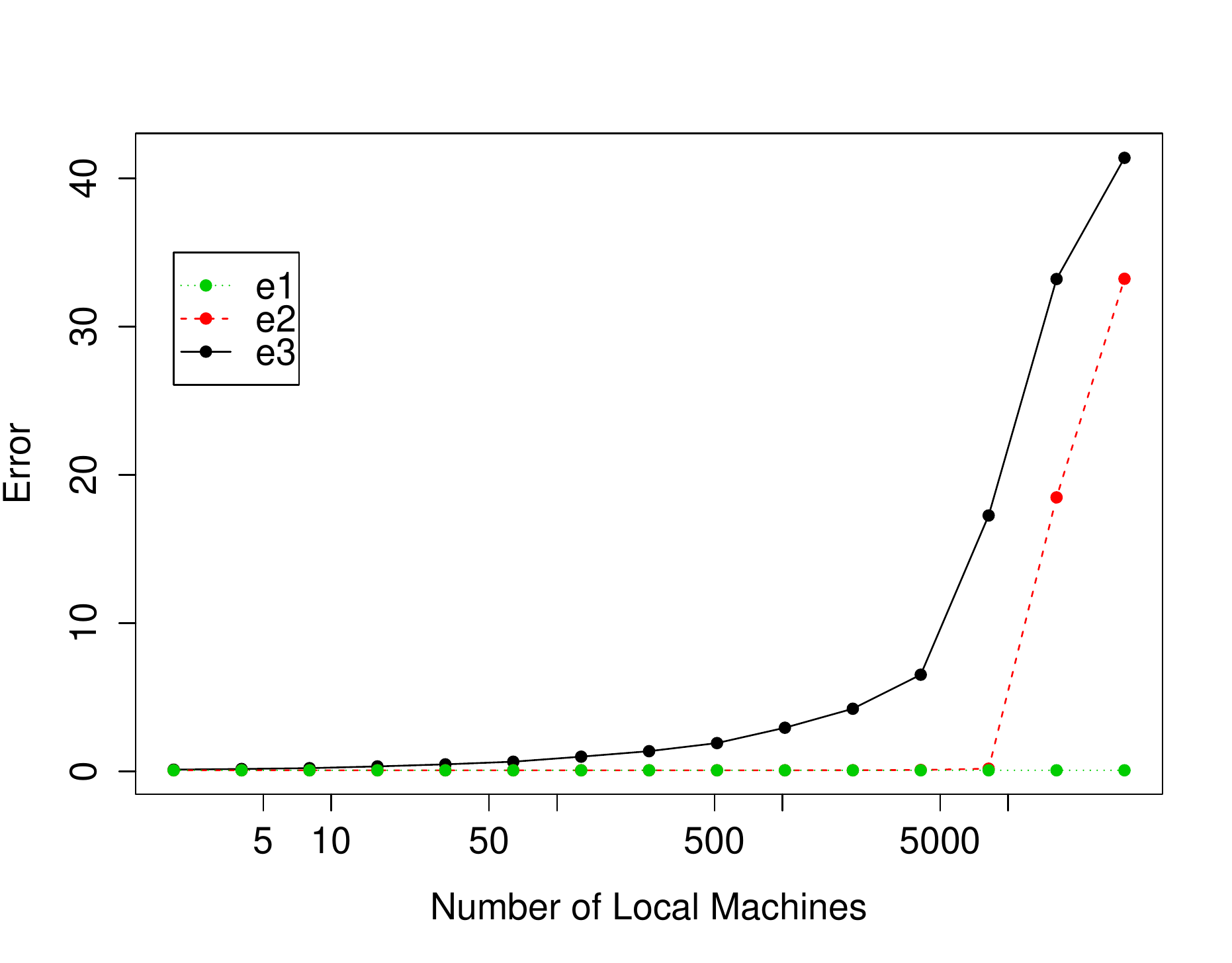}
  \end{center}
 \caption{Estimation errors for the distributed regression.}\label{figureDistributed_regression}
\end{figure}

The distributed algorithm provides a computationally viable route for learning with big data.
However, it remains largely unknown that whether such a divide-and-conquer scheme indeed provides valid theoretical inferences to the original data.
For point estimation, Li et al. \cite{li2013statistical} showed that the distributed moment estimation is consistent, if an unbiased estimate is obtained for each of the sub-problems.
For kernel ridge regression, Zhang et al. \cite{zhang2013divide} showed that, with appropriate tuning parameters, the distributed algorithm does lead to a valid estimation. To provide some insights on the feasibility issue, we numerically compare the estimation accuracy of $\bar{\beta}$ with that of $\hat{\beta}$ in the aforementioned example.
Specifically, we generate $x_{i}$ independently from $N(0, I_{d \times d})$ and set $\beta$ based on $d$ independent observations from $U[0,1]$.
The value of $y_{i}$ is generated from the presumed linear model with $\varepsilon \sim N(0,1)$. We then randomly distribute the full data to $m \in [2^{0}, 2^{15}]$ local machines and output $\bar{\beta}$ based on $m$ local ridge estimates $\hat{\beta}_{j}$ for $j=1, \ldots, m$.
In Figure 2, we plot the estimation errors versus the number of local machines $m$ based on three types of estimators: $e_1=\|\beta-\hat{\beta}\|_2^2$, $e_2=\|\beta-\bar{\beta}\|_2^2$, and $e_3=\min_{j}\|\beta-\hat{\beta}_{j}\|_2^2$.
For a wide range of $m$, it seems that the distributed estimator $\bar{\beta}$ leads to a similar accuracy as the traditional $\hat{\beta}$ does.
However, this argument tends to be false when $m$ is overly large. This observation brings an interesting but fundamental question for using the distributed algorithm in regression: under what conditions the distributed estimator provides an effective estimation of the target function?
In this paper, we aim to find an answer to this question and provide more general theoretical support for the distributed regression.

Under the kernel-based regression setup, we propose to take the generalization consistency as a criterion for
measuring the feasibility of the distributed algorithms.
That is, we regard an algorithm is theoretically feasible if its
generalization error tends to zero as the number of observations $N$ goes to infinity.
To justify the distributed regression, a uniform convergence rate is needed for bounding the generalization error over the $m$ sub-estimators.
This brings new and challenging issues in analysis for the big data setup.
Under mild conditions, we show that the distributed kernel regression (DKR) is feasible when the number of its distributed sub-problems is moderate.
Our result is applicable to many commonly used regression models, which incorporate a variety of loss, kernel, and penalty functions.
Moreover, the feasibility of DKR does not rely on any parametric assumption on the true model. It therefore provides a basic and general understanding for the distributed regression analysis.
We demonstrate the promising performance of DKR via both simulation and real data examples.

The rest of the paper is organized as follows. In Section \ref{sectionDSRA}, we introduce model setup and formulate the DKR algorithm.
In Section \ref{sectionmain}, we establish the generalization consistency and justify the feasibility of DKR.
In Section \ref{sectoinapplication}, we show numerical examples to support the good performance of DKR.
Finally, we conclude the paper in Section \ref{sectionConclusion} with some useful remarks.
%We provide the proofs of theorems in the main text and state technical lemmas in Appendix.

\section{Distributed Kernel Regression}\label{sectionDSRA}

\subsection{Notations}

Let $Y \in [-M,M] \subset  \mathbb{R}$ be a response variable bounded by some $M>0$ and $X \in \mathcal{X} \subset \mathbb{R}^{d}$ be its $d$-dimensional covariate drawn from a compact set $ \mathcal{X}$. Suppose that $Z= X \times Y$ follows from a fixed but unknown distribution $\rho$ with its support fully filled on $\mathcal{Z}=[-M, M] \times \mathcal{X}$.
Let $S=\{z_{i}=(y_{i}, x_{i}), i=1, \ldots, N\}$ be $N$ independent observations collected from $Z$. The goal of study is to estimate the potential relationship $f^{*}: \mathcal{X} \rightarrow R$ between $X$ and $Y$ through analyzing $S$.

Let $\ell(.)$ be a nonnegative loss function and $f$ be an arbitrary mapping from $\mathcal{X}$ to $\mathbb{R}$.
We use
\begin{equation*}
\mathcal{E}(f)=\mathbb{E}_{z}[{\ell }(f,z)]=\int_{\mathcal{Z}}{{\ell }(f,z)} d\rho
\end{equation*}
to denote the expected risk of $f$. The minimizer $f_{\rho}= \arg \min \mathcal{E}(f)$ is called the regression function, which is an oracle estimate under $\ell$ and thus serves as a benchmark for other estimators. Since $\rho$ is unknown, $f_{\rho}$ is only conceptual. Practically, it is common to estimate $f^{*}$ through minimizing a regularized empirical risk
\begin{equation} \label{RERM}
\min\limits_{f\in \mathcal{F}}\Big\{ \mathcal{E}_{S}(f) + \lambda \|f\| \Big\},
\end{equation}
where $\mathcal{F}$ is a user-specified hypothesis space, $\mathcal{E}_{S}(f)= \sum_{i=1}^{N}{\ell }(f,z_{i})/N$ is the empirical risk,  $\|.\|$ is a
norm in $\mathcal{F}$, and $\lambda \geq 0$ is a regularization parameter.

Framework (\ref{RERM}) covers a broad range of regression methods. In the machine learning community, it is popular to set $\mathcal{F}$ by a reproducing kernel Hilbert space (RKHS).
Specifically, let $K:\mathcal{X}\times \mathcal{X}\rightarrow \mathbb{R}$ be a
continuous, symmetric, and semi-positive definite kernel function.
The RKHS $\mathcal{H}_{K}=\overline{\mbox{span}}\{K(x,.), x \in \mathcal{X}\}$ is a Hilbert space
of $L^{2}$-integrable functions induced by $K$. For any $f=\sum_{i}\alpha_{i}K(u_{i}, .)$ and $g=\sum_{i}\beta_{i}K(v_{i}, .)$, their inner product is defined by
\[
<f,g>_{K} = \sum_{i, j}\alpha_{i} \beta_{j}K(u_{i}, v_{j})
\]
and the kernel $L_{2}$ norm is given by $\|f\|^{2}_{K} = <f,f>_{K}$.
It is easy to verify that
\begin{equation} \label{RD}
f(x) = < f, K(x,\cdot )> _{\mathcal{H}_{K}}
\end{equation}
for any $f\in\mathcal{H}_{K}$. Therefore, $K$ is a reproducing kernel of $\mathcal{H}_{K}$.
Readers may refer to \cite{Berlinet2004} \cite{wahba1990} for more detailed discussions about RKHS.

%\noindent By Mercer's theorem \cite{cucker2007}, under some
%conditions, the kernel function $K(x, x')$ can be decomposed into
%\begin{equation}
%K(x, x')=\sum_{j=1}^{\infty }\theta _{j}\phi _{j}(x)\phi _{j}(x'),
%\end{equation}%
%where $\theta _{1}\geq \theta _{2}\geq \dots \geq 0$ are the eigenvalues of
%the operator
%\[
%T[\varphi (x)]=\int_{\mathcal{X}}K(x,x')\varphi (x')dx',\forall \varphi \in
%L_{\rho }^{2}(\mathcal{X})
%\]
%\noindent and $\{\phi _{j}\}_{j=1}^{\infty }$ are the corresponding
%orthogonal eigenfunctions. Based on the reproducing property (\ref{RD}),
%one has $<\phi _{j},\phi _{j}>_{\mathcal{H}_{K}}=1/\theta _{j}$ for $\theta
%_{j}\neq 0$ and $<\phi _{i},\phi _{j}>_{\mathcal{H}_{K}}=0$ for $i\neq j$.
%Thus, $\{\sqrt{\theta _{j}}\phi
%_{j}\}_{j=1}^{\infty }$ is an orthogonal basis of $\mathcal{H}_{K}$.
%It implies that, for any $f\in L_{\rho }^{2}(\mathcal{X})$ with a representation
%\[
%f=\sum_{j=1}^{\infty }a_{j}\phi _{j},
%\]
%for $\text{ }a_{j}=<f,\phi
%_{j}>_{L_{\rho }^{2}(\mathcal{X})}$,
%it can be expressed in $\mathcal{H}_{K}$ as
%\[
%f=\sum_{j=1}^{\infty }a_{j}\sqrt{\theta _{j}}\phi_{j}.
%\]
Let $\mathcal{C}(\mathcal{X})$ denote the space of continuous functions on $\mathcal{X}$. It is known that $\mathcal{H}_{K}$ is dense in $\mathcal{C}(\mathcal{X})$ with appropriate choices of $K$ \cite{Micchelli-et-al(2006)}. This property makes $\mathcal{H}_{K}$ a highly flexible space to estimate an arbitrary $f^* \in \mathcal{C}(\mathcal{X})$. In this paper, we follow framework (\ref{RERM}) with $\mathcal{F}=\mathcal{H}_{K}$ and $\|.\| = \|.\|_{K}^{p}$ for some $p>0$.

\subsection{The DKR Algorithm}

We now consider (\ref{RERM}) in the big data setup.
In particular, we assume that sample $S$ is too big to be processed in a
single machine and thus we need to use its distributed version.
Suppose $S$ is evenly and randomly assigned to $m$ local
machines, with each machine processing $n=N/m$ samples. We denote by $%
S_{j},$ $j=1,2,\dots ,m$ the sample segment assigned to the $j$th machine.
The global estimator is then
constructed through taking average of the $m$ local
estimators.
Specifically, by setting $\mathcal{F}=\mathcal{H}_{K}$ in (\ref{RERM}),
this strategy leads to the distributed kernel regression (DKR), which is described as Algorithm \ref{DSRA}.

\begin{algorithm}[htb]

\renewcommand{\algorithmicrequire}{\textbf{Input:}}

\renewcommand\algorithmicensure {\textbf{Output:} }

\caption{The DKR Algorithm}

\label{DSRA}
\begin{algorithmic}[1]

\REQUIRE ~
$S$,  $K$,  $\lambda$, $m$ \\

\ENSURE ~
 $\bar{f}$\\
~\\
\STATE Randomly split $S$ into $m$ sub-samples $S_{1}$, \ldots, $S_{m}$ and store them separately on $m$ local machines.\\ \vspace{0.2cm}

\STATE Let $T_{M}[.]$ be a truncation operator with a cutoff threshold $M$. For $j= 1, 2, \dots, m$, find a local estimator based on $S_{j}$ by
\[
 \hat{f}_{j} = T_{M}\left[ f_{j} \right],
\]
where
\begin{equation*}
f_{j} = \arg\min\limits_{f\in\mathcal{H}_K}\Big\{\frac{1}{n}\sum_{z_i\in S_j}\ell(f,z_i)+\lambda\|f\|_{K}^p\Big\}.
\end{equation*}
\\

\STATE Combine $\hat{f}_{j}$s to get a global estimator
\begin{eqnarray*}
\bar{f}= \frac{1}{m}\sum_{j=1}^m\hat{f}_{j}.
\end{eqnarray*}
\end{algorithmic}

\end{algorithm}

By representer theorem \cite{scho2001}, $f_{j}$ in step 2 of DKR can be constructed from $\mbox{span}\{K(x_{i},.), x_i \in S_{j} \}$.
This allows DKR to be practically carried out within finite $n$-dimensional subspaces. The distributive framework of DKR enables parallel processing and thus is appealing to the analysis of big data.
With $m=1$, DKR reduces to the regular kernel-based learning, which has received a great deal of attention in the literature  \cite{scholkopf2001learning} \cite{wu2006learning} \cite{zhang2005learning}. With quadratic $\ell$ and $p=2$, Zhang et. al.~\cite{zhang2013divide} conducted a feasibility analysis for DKR with $m>1$. Unfortunately, their results are built upon the close-form solution of $f_{j}$ and thus are not applicable to other DKR cases. In this work, we attempt to provide a more general feasibility result for using DKR in dig data.

%=========================================================================
\section{Consistency of DKR}\label{sectionmain}

\subsection{Preliminaries and Assumptions}

In regression analysis, a good estimator of $f^*$ is expected not only to fit training set $S$ but also to predict the future samples from $Z$.
In the machine learning community, such an ability is often referred to as the generalization capability.
Recall that $f_{\rho}$ is a conceptual oracle estimator, which enjoys the lowest generalization risk in a given loss.
The goodness of $\bar{f}$ can be typically measured by
\begin{eqnarray}\label{Gen_error}
\mE(\bar{f})-\mE(f_{\rho})= \mathbb{E}_{z}[\ell(\bar{f}, z) - \ell(f_{\rho}, z)].
\end{eqnarray}
A feasible (consistent) $\bar{f}$ is then required to have
generalization error (\ref{Gen_error}) converge to zero as $N \rightarrow \infty$.
When the quadratic loss is used, the convergence of (\ref{Gen_error}) also leads to the convergence of $\|\bar{f} - f_{\rho}\|_{2}$,
which responds to the traditional notion of consistency in statistics.

When $\ell$ is convex, Jensen's inequality implies that
\[
\mE(\bar{f})-\mE(f_{\rho}) \leq \frac{1}{m}\sum_{j=1}^{m}[\mE(\hat{f}_{j})-\mE(f_{\rho}) ].
\]
Therefore, the consistency of $\bar{f}$ is implied by the uniform consistency of the $m$ local estimators $\hat{f}_{j}$ for $j=1, \ldots, m$.
Under appropriate conditions, this result may be straightforward in the fixed $m$ setup. However, for analyzing big data, it is particularly desired to have $m$ associated with sample size $N$.
This is because the number of machines needed in an analysis is usually determined by the scale of that problem. The larger a dataset is, the more machines are needed.
This in turn suggests that, in asymptotic analysis, $m$ may diverge to infinity as $N$ increases. This liberal requirement of $m$ poses new and challenging issues to justify $\bar{f}$ under the big data setup.

Clearly, the effectiveness of a learning method relies on the
prior assumptions on $f_{\rho}$ as well as the choice of $\ell$. For the convenience of
discussion, we assess the performance of DKR under the following conditions.

\begin{enumerate}
 \item [A1] $f_\rho\in \mathcal{C}(\mathcal{X})$ and $\|f_\rho\|_{\infty} \leq M$, where $\|.\|_{\infty}$ denotes the function supremum norm.

 \item [A2] The loss function $\ell$ is convex and nonnegative. For any $f_{1},f_{2}\in \mathcal{C}(\mathcal{X})$ and $z \in \mathcal{Z}$,
there exists a constant $L$ such that
\[
|\ell (f_{1},z)-\ell (f_{2},z)|\leq L\Vert f_{1}-f_{2}\Vert _{\infty }.
\]

 \item [A3] For any $\omega>0$ and $g \in \mathcal{C}(\mathcal{X})$, there exists a $f \in \mathcal{H}_{K}$, such that $\|f-g\|_{\infty} < \omega$.
Moreover, let $\mathcal{B}_{R}=\{f \in \mathcal{H}_{K}, \|f\|_{\infty} \leq R \}$ for some $R >0$. There exists constants $C_0$, $s>0$, such that
\[
\log \mathcal{N}_{\infty}(\mathcal{B}_{1}, \gamma) \leq C_{0} \gamma^{- s},
\]
where $\mathcal{N}_{\infty}(\mathcal{F}, \gamma)$ denotes the covering number of a set $\mathcal{F}$ by balls of radius $\gamma$ with respect to $\|.\|_{\infty}$.

\end{enumerate}

%Condition A1 is a common smoothness requirement on the regression function $f_{\rho}$, which has be widely used in the statistical learning literature.
%A2 requires that metric entropy of the space $\mathcal{H}_K$ reproduced by $K$ has a polynomial complexity. Many popular kernels satisfy this condition~\cite{zhou2003capacity} such as the Gaussian kernel $K(x,x')=\exp\{-c\|x-x'\|^2_2\}$ and inverse multi-quadric kernel $K(x,x')=(c^2+\|x-x'\|_2^2)^{-\delta}$ for $\delta\geq d/2$.

Condition A1 is a regularity assumption on $f_{\rho}$, which can be trivial in applications.
For the quadratic loss, we have $f_{\rho}(X)=\mathbb{E}(Y|X)$ and thus A1 holds naturally with $Y \in[-M, M]$.
Condition A2 requires that $\ell(f, z)$ is Lipschitz continuous in $f$. It is satisfied by many commonly used loss functions for regression analysis.
Condition A3 corresponds to the notion of universal kernel in \cite{Micchelli-et-al(2006)}, which implies that $\mathcal{H}_{K}$ is dense in $\mathcal C({\mathcal{X}})$.
It therefore serves as a prerequisite for estimating an arbitrary $f^{*} \in \mathcal{C}(\mathcal{X})$ from $\mathcal H_{K}$.
A3 also requires that the unit subspace of $\mathcal{H}_K$ has a polynomial complexity. Under our setup, a broad choices of $K$ satisfy this
condition, which include the popular Gaussian kernel as a special case \cite{zhou2002covering} \cite{zhou2003capacity}.

\subsection{Generalization Analysis}

To justify DKR, we decompose (\ref{Gen_error}) by
\begin{eqnarray}\label{sampleerror}
\mE(\bar{f})-\mE(f_{\rho})&=&\underbrace{\mE_S(f)-\mE(f)+\mE(\bar{f})-\mE_S(\bar{f})}_{\text{sample error}}\\ \label{hypothesiserror}
&+& \underbrace{\mE_S(\bar{f})-\mE_S(f)}_{\text{hypothesis error}}\\
\label{approximationerror}
&+&\underbrace{\mE(f)-\mE(f_\rho)}_{\text{approximation error}},
\end{eqnarray}
where $f$ is an arbitrary element of $\mathcal{H}_{\mathcal{K}}$.
The consistency of $\bar{f}$ is implied if (\ref{Gen_error}) has convergent sub-errors in (\ref{sampleerror})-(\ref{approximationerror}).
Since $f \in \mathcal{H}_{\mathcal{K}}$ is arbitrary, (\ref{approximationerror}) measures how close the oracle $f_{\rho}$ can be approximated from the candidate space $\mathcal{H}_{\mathcal{K}}$.
This is a term that purely reflects the prior assumptions on a learning problem.
Under Conditions A1-A3, with a $f$ such that $\|f - f_{\rho}\|\leq N^{-1}$, (\ref{approximationerror}) is naturally bounded by $L/N$.
We therefore carry on our justification by bounding the sample and hypothesis errors.

\subsubsection{Sample Error Bound}
Let us first work on the sample error (\ref{sampleerror}), which describes the difference between the expected loss and the empirical loss for an estimator.
For the convenience of analysis, let us rewrite (\ref{sampleerror}) as
\begin{eqnarray} \nonumber
&&\mE_S(f)-\mE(f)+\mE(\bar{f})-\mE_S(\bar{f}) \\ \label{decomposesampleerror}
&&  = \Big\{\frac{1}{N}\sum_{i=1}^{N}\xi_1(z_i)-\mathbb{E}_{z}(\xi_1)\Big\} + \Big\{\mathbb{E}_{z}(\xi_2)-\frac{1}{N}\sum_{i=1}^{N}\xi_2(z_i)\Big\},
\end{eqnarray}
where $\xi_1(z) = \ell(f,z)-\ell(f_\rho,z)$ and $\xi_2(z) =\ell(\bar{f},z)-\ell(f_\rho,z)$.
It should be noted that the randomness of $\xi_1$ is purely from $Z$, which makes $\mathbb{E}_{z}(\xi_1)$ a fixed quantity and $\sum_{i=1}^{N}\xi_1(z_{i})/N$ a sample mean of independent observations. For $\xi_2$, since $\bar{f}$ is an output of $S$, $\mathbb{E}_{z}(\xi_2)$ is random in $S$ and $\xi_2(z_{i})$s are dependent with each other.
We derive a probability bound for the sample error through investigating (\ref{decomposesampleerror}).

To facilitate our proofs, we first state one-side Bernstein inequality as the following lemma.
\begin{lemma}\label{bernstein}
Let $y_{1}, \ldots, y_{N}$ be $N$ independently and identically distributed random variables with $\mathbb{E}(y_{1})= \mu$ and $\mbox{var}({y_{1}})= \sigma^{2}$.
If $|y_{1} - \mu| \leq T$ for some $T > 0$, then for any $\varepsilon>0$,
\[
\mathbb{P} \left\{\frac{1}{N}\sum_{i=1}^{N}y_{i} -\mu\geq\varepsilon\right\} \leq
\exp \left\{\frac{-N\varepsilon^2}{2(\sigma^2
+\varepsilon T/3)}\right\}.
\]
\end{lemma}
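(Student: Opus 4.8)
The plan is to use the classical exponential-moment (Chernoff) method. First I would center the variables: set $w_i = y_i - \mu$, so that $\mathbb{E}(w_i) = 0$, $\operatorname{var}(w_i) = \sigma^2$, and $|w_i| \le T$. The target event becomes $\big\{\sum_{i=1}^N w_i \ge N\varepsilon\big\}$, and for any $t > 0$ Markov's inequality applied to $e^{t\sum_{i} w_i}$, together with independence to factor the joint moment generating function, gives
\[
\mathbb{P}\Big\{\tfrac1N\sum_{i=1}^N w_i \ge \varepsilon\Big\} \;\le\; e^{-tN\varepsilon}\big(\mathbb{E}\,e^{t w_1}\big)^{N}.
\]

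Next I would control the single-variable generating function $\mathbb{E}\,e^{t w_1}$. Expanding the exponential and using $\mathbb{E}(w_1)=0$ together with the moment estimate $|\mathbb{E}(w_1^{k})| \le T^{k-2}\sigma^2$ for $k \ge 2$, which follows from $|w_1|\le T$, one obtains
\[
\mathbb{E}\,e^{t w_1} \;\le\; 1 + \frac{\sigma^2}{T^2}\big(e^{tT} - 1 - tT\big) \;\le\; \exp\Big\{\frac{\sigma^2}{T^2}\big(e^{tT} - 1 - tT\big)\Big\},
\]
where the last step uses $1+x\le e^{x}$. Then the elementary factorial bound $k! \ge 2\cdot 3^{k-2}$ yields, for $0 < tT < 3$,
\[
e^{tT} - 1 - tT \;\le\; \frac{(tT)^2/2}{1 - tT/3},
\]
so that $\mathbb{P}\big\{\tfrac1N\sum_{i} w_i \ge \varepsilon\big\} \le \exp\big\{-N\big(t\varepsilon - \tfrac{\sigma^2 t^2/2}{1-tT/3}\big)\big\}$.

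Finally I would optimize the exponent by choosing $t = \varepsilon/(\sigma^2 + \varepsilon T/3)$, which satisfies $tT/3 < 1$ so the previous step applies. A short computation gives $1 - tT/3 = \sigma^2/(\sigma^2 + \varepsilon T/3)$, whence $t\varepsilon - \tfrac{\sigma^2 t^2/2}{1 - tT/3} = t\varepsilon/2 = \varepsilon^2\big/\big(2(\sigma^2 + \varepsilon T/3)\big)$, which is exactly the claimed rate. This is a classical inequality, so there is no genuine obstacle; the only points needing minor care are the bookkeeping in the moment/factorial series estimate and checking that the optimizing $t$ stays in the admissible range $tT<3$. Alternatively, the statement can simply be quoted from a standard reference on empirical process theory.
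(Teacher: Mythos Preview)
Your proof is correct and is the standard Chernoff-method derivation of the one-sided Bernstein inequality: the moment bound $|\mathbb{E}(w_1^k)|\le T^{k-2}\sigma^2$, the series estimate $e^{tT}-1-tT\le \tfrac{(tT)^2/2}{1-tT/3}$ via $k!\ge 2\cdot 3^{k-2}$, and the explicit optimizer $t=\varepsilon/(\sigma^2+\varepsilon T/3)$ all check out.

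The paper, however, does not prove this lemma at all; it simply introduces it with the phrase ``we first state one-side Bernstein inequality as the following lemma'' and uses it as a black box in the subsequent propositions. So the difference is purely one of self-containment: you supply the classical argument, whereas the paper treats the inequality as a standard quoted fact. Either choice is acceptable here; your final remark that the statement can be quoted from a standard reference is exactly what the paper does.
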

The probability bounds for the two terms of (\ref{decomposesampleerror}) are given respectively in the following propositions.

\begin{proposition}\label{firstbound}
Suppose that Conditions A1-A2 are satisfied. For any $0<\delta<1$ and $f \in \mathcal{H}_{K}$, we have
\begin{equation*}
\mathbb{P}\left\{\frac{1}{N}\sum_{i=1}^{N}\xi_1(z_i)-\mathbb{E}_{z}(\xi_1)  \leq 2L \|f - f_{\rho}\|_{\infty} \left(\frac{\log(1/\delta)}{N}+\sqrt{\frac{\log(1/\delta)}{N}}\right) \right\}\geq 1-\delta.
\end{equation*}
\end{proposition}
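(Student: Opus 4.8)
The plan is to recognize the first bracket of (\ref{decomposesampleerror}) as a centered sum of i.i.d.\ bounded random variables and apply the one-sided Bernstein inequality of Lemma~\ref{bernstein}. Set $y_i = \xi_1(z_i)$; as noted above these are i.i.d.\ with common mean $\mu = \mathbb{E}_z(\xi_1)$, which is a fixed (non-random) quantity. The first step is to control the range and variance of $y_1$. Condition~A1 guarantees $f_\rho \in \mathcal{C}(\mathcal{X})$, so $\|f-f_\rho\|_\infty < \infty$ and Condition~A2 applies: $|\xi_1(z)| = |\ell(f,z)-\ell(f_\rho,z)| \le L\|f-f_\rho\|_\infty$ for every $z \in \mathcal{Z}$. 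Writing $B = L\|f-f_\rho\|_\infty$, this gives at once $|y_1-\mu| \le 2B$ and $\sigma^2 := \mbox{var}(y_1) \le \mathbb{E}_z(\xi_1^2) \le B^2$.

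With $T = 2B$ and $\sigma^2 \le B^2$, Lemma~\ref{bernstein} yields, for every $\varepsilon > 0$,
\[
\mathbb{P}\left\{ \frac{1}{N}\sum_{i=1}^N \xi_1(z_i) - \mathbb{E}_z(\xi_1) \ge \varepsilon \right\} \le \exp\left\{ \frac{-N\varepsilon^2}{2(B^2 + 2B\varepsilon/3)} \right\}.
\]
The second step is to choose $\varepsilon$ so that the exponent is at least $\log(1/\delta)$. Put $t = \log(1/\delta)$ and take $\varepsilon^\ast = 2B\big(t/N + \sqrt{t/N}\,\big)$. Writing $u = \sqrt{t/N}$ (so that $\varepsilon^\ast = 2Bu(u+1)$ and $Nu^2 = t$), a one-line computation reduces the desired inequality $N(\varepsilon^\ast)^2 \ge 2t\,(B^2 + 2B\varepsilon^\ast/3)$ to $2(u+1)^2 \ge 1 + \tfrac{4}{3}u(u+1)$, i.e.\ to $\tfrac{2}{3}u^2 + \tfrac{8}{3}u + 1 \ge 0$, which is trivially true. (Alternatively, one may set the exponent equal to $t$, solve the resulting quadratic in $\varepsilon$, and bound its positive root crudely via $\sqrt{a+b}\le\sqrt a+\sqrt b$.) Hence $\mathbb{P}\{\tfrac1N\sum_i \xi_1(z_i) - \mathbb{E}_z(\xi_1) \ge \varepsilon^\ast\} \le e^{-t} = \delta$.

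Passing to the complementary event gives $\mathbb{P}\{\tfrac1N\sum_i \xi_1(z_i) - \mathbb{E}_z(\xi_1) < \varepsilon^\ast\} \ge 1-\delta$, and since $\varepsilon^\ast = 2L\|f-f_\rho\|_\infty\big(\log(1/\delta)/N + \sqrt{\log(1/\delta)/N}\,\big)$ is exactly the quantity appearing in the statement — and relaxing ``$<$'' to ``$\le$'' only enlarges the event — the proposition follows.

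I do not anticipate any genuine obstacle here: this is essentially a single invocation of Bernstein's inequality. The only points needing a little care are (i) keeping the variance estimate at $B^2$ rather than the wasteful $(2B)^2$, since that factor feeds directly into the leading constant, and (ii) arranging the threshold in the slightly non-standard two-term shape $c\,(t/N + \sqrt{t/N})$ — the $t/N$ piece coming from the $\varepsilon T/3$ term of the Bernstein denominator and the $\sqrt{t/N}$ piece from the $\sigma^2$ term — so that this bound can later be added cleanly to the hypothesis-error bound without changing the overall rate.
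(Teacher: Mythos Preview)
Your proposal is correct and matches the paper's proof essentially line for line: bound $|\xi_1|\le B:=L\|f-f_\rho\|_\infty$ to get $\sigma^2\le B^2$ and $T=2B$, apply Lemma~\ref{bernstein}, and then produce the threshold $\varepsilon^\ast$. The only cosmetic difference is that the paper sets the Bernstein tail equal to $\delta$, solves the resulting quadratic in $\varepsilon$, and bounds its positive root via $\sqrt{a+b}\le\sqrt a+\sqrt b$ --- precisely the alternative you mention parenthetically --- whereas you verify the same $\varepsilon^\ast$ by a direct substitution; the content is identical.
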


\begin{proof}
% With the notation $\xi_1:=\ell(f_0,z)-\ell(f_\rho,z)$, we have $\mE_z(f_0)-\mE(f_0) + \mE(f_\rho) - \mE_z(f_\rho)=\frac{1}{N}\sum_{i=1}^{N}\xi_1(z_i)-\mathbb{E}(\xi_1)$.

Let $f$ be an arbitrary function in $\mathcal{H}_{K}$. By Condition A2, we have
 $$|\xi_1(z)|=|\ell(f,z)-\ell(f_\rho,z)|\leq L\|f-f_\rho\|_{\infty}$$
for some constant $L>0$.
This implies that $\mbox{var}(\xi_1)\leq L^{2}\|f-f_\rho\|_{\infty}^{2}$ and $|\xi_1-\mathbb{E}_{z}(\xi_1)|\leq 2L\|f-f_\rho\|_{\infty}$.
By Lemma \ref{bernstein}, we have,
 \begin{equation}\label{prop1-1}
 \mathbb{P}\Big\{\frac{1}{N}\sum_{i=1}^{N}\xi_1(z_i)-\mathbb{E}_{z}(\xi_1)\geq \varepsilon \Big\}\leq \exp \left\{-\frac{Nt^2}{2(L^2\|f-f_\rho\|_{\infty}^2+2/3L\|f-f_\rho\|_{\infty}t)}\right\}
\end{equation}
 for any $\varepsilon >0$. Denoting the right hand side of (\ref{prop1-1}) by $\delta$, we have
\begin{equation}\label{prop1-2}
N \varepsilon^2+\frac{4}{3}L\|f-f_\rho\|_{\infty}\log\delta \varepsilon +2L^2\|f-f_\rho\|_{\infty}^2\log\delta=0.
\end{equation}
The positive root of (\ref{prop1-2}) is given by
\begin{eqnarray} \nonumber
 \varepsilon^* &=& \frac{\frac{4}{3}L \|f-f_\rho\|_{\infty} \log 1/\delta+L \|f-f_\rho\|_{\infty} \sqrt{\frac{16}{9}\log^21/\delta+8N\log 1/\delta}}{2N} \\ \nonumber
 &\leq& L \|f-f_\rho\|_{\infty} \left(\frac{4\log 1/\delta}{3N}+\sqrt{\frac{2\log 1/\delta}{N}}\right) \\ \label{prop1-3}
 &\leq& 2L \|f - f_{\rho}\|_{\infty} \left(\frac{\log(1/\delta)}{N}+\sqrt{\frac{\log(1/\delta)}{N}}\right).
\end{eqnarray}
The proposition is proved by setting $\varepsilon = \varepsilon^*$ in (\ref{prop1-1}).
\end{proof}

\begin{proposition}\label{secondbound}
Suppose that Conditions A1-A3 are satisfied. For any $0<\delta<1$ and $f \in \mathcal{H}_{K}$, we have
\begin{equation*}
\mathbb{P}\left\{ \mathbb{E}_{z}(\xi_2)-\frac{1}{N}\sum_{i=1}^{N}\xi_2(z_i) \leq  12ML\left(\frac{V(N, \delta) +  \sqrt{V(N, \delta)N}}{N} \right) + N^{- 1/(s+2)} \right\} \geq 1 - \delta
\end{equation*}
where $V(N, \delta) = C_{0}(8LM N^{1/(s+2)})^{s} - \log \delta$.
\end{proposition}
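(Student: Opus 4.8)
The obstacle is that $\bar{f}$ depends on the entire sample $S$, so $\frac1N\sum_{i=1}^N\xi_2(z_i)$ is not an average of i.i.d.\ terms and Lemma~\ref{bernstein} cannot be applied to it directly. The standard remedy is a uniform deviation bound over a totally bounded class that is guaranteed to contain $\bar{f}$; the subtlety here is that $\bar{f}=\frac1m\sum_j T_M[f_j]$ is an average of $M$-truncated functions, hence need not lie in $\mathcal{H}_K$, so the metric-entropy bound in A3 does not apply to $\bar{f}$ itself. The point that makes the argument go through is that the truncation guarantees $\|\bar{f}\|_\infty\le M$ (and $\bar{f}\in\mathcal{C}(\mathcal{X})$), so by the density half of A3, for any $\omega>0$ there is $f'\in\mathcal{H}_K$ with $\|f'-\bar{f}\|_\infty<\omega$, and then $\|f'\|_\infty<M+\omega\le 2M$ once $\omega\le M$. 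Thus $\bar{f}$ can be replaced, up to a controlled error in $\ell$, by an element of the sup-norm ball $\mathcal{B}_{2M}\subset\mathcal{H}_K$, whose covering numbers A3 controls by rescaling: $\log\mathcal{N}_\infty(\mathcal{B}_{2M},\gamma)\le C_0(2M/\gamma)^s$. (Note that only the boundedness $\|\bar{f}\|_\infty\le M$ forced by $T_M$, together with A1 and the Lipschitz part of A2 and A3, is used here; neither convexity of $\ell$ nor minimality of the $f_j$ enters.)

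Concretely I would proceed as follows: (i) fix $\gamma:=\frac{1}{4L}N^{-1/(s+2)}$ and $\omega:=\gamma$, so that $2M/\gamma=8LMN^{1/(s+2)}$ and so that a later residual $4L\gamma$ equals exactly $N^{-1/(s+2)}$; (ii) take a finite $\gamma$-net $\{g_1,\dots,g_{\mathcal{N}}\}$ of $\mathcal{B}_{2M}$ with $\log\mathcal{N}\le C_0(8LMN^{1/(s+2)})^s$; (iii) for each net point $g_k$ put $y_i=\ell(g_k,z_i)-\ell(f_\rho,z_i)$, which by A1--A2 satisfies $|y_i|\le L\|g_k-f_\rho\|_\infty\le 3LM$ (so $|y_i-\mathbb{E}y_i|\le 6LM$ and $\var(y_i)\le 9L^2M^2$), and apply Lemma~\ref{bernstein} to $-y_i$ at level $\delta/\mathcal{N}$; (iv) union-bound over $k=1,\dots,\mathcal{N}$. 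Inverting the Bernstein exponent, in which $\log(\mathcal{N}/\delta)=\log\mathcal{N}+\log(1/\delta)$ is exactly bounded by $V(N,\delta)$, gives that with probability at least $1-\delta$, simultaneously for all $k$,
\[
\mathbb{E}_z\bigl[\ell(g_k,\cdot)-\ell(f_\rho,\cdot)\bigr]-\frac1N\sum_{i=1}^N\bigl[\ell(g_k,z_i)-\ell(f_\rho,z_i)\bigr]\le 12ML\,\frac{V(N,\delta)+\sqrt{V(N,\delta)N}}{N},
\]
the constant $12$ leaving comfortable slack in the routine algebra of solving the quadratic.

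It then remains to transfer this to the data-dependent $\bar{f}$ on the same event. Choose $f'\in\mathcal{H}_K$ with $\|f'-\bar{f}\|_\infty<\omega$ and a net point $g_k$ with $\|g_k-f'\|_\infty\le\gamma$, so that $\|g_k-\bar{f}\|_\infty\le 2\gamma$; by A2 this makes both $\bigl|\mathbb{E}_z(\xi_2)-\mathbb{E}_z[\ell(g_k,\cdot)-\ell(f_\rho,\cdot)]\bigr|$ and $\bigl|\frac1N\sum_i\xi_2(z_i)-\frac1N\sum_i[\ell(g_k,z_i)-\ell(f_\rho,z_i)]\bigr|$ at most $2L\gamma$, whence
\[
\mathbb{E}_z(\xi_2)-\frac1N\sum_{i=1}^N\xi_2(z_i)\le 12ML\,\frac{V(N,\delta)+\sqrt{V(N,\delta)N}}{N}+4L\gamma,
\]
and $4L\gamma=N^{-1/(s+2)}$ by the choice of $\gamma$, which is the claimed inequality (for $N$ large enough that $\gamma\le M$; for smaller $N$ the right-hand side is already large).

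The one genuinely load-bearing step is the reduction in the first paragraph: legitimately passing from $\bar{f}$, which is \emph{not} in $\mathcal{H}_K$, to a net element of an $\mathcal{H}_K$-ball via the density hypothesis, while keeping the bookkeeping tight enough that the entropy contribution appears precisely as $V(N,\delta)$ and the approximation residual precisely as $N^{-1/(s+2)}$. Two points to watch: Lemma~\ref{bernstein} is one-sided and must be applied to $-y_i$, so that it controls $\mathbb{E}_z-\mathbb{E}_S$ rather than $\mathbb{E}_S-\mathbb{E}_z$; and the net radius $\gamma$ and the density tolerance $\omega$ must be coupled to the same power $N^{-1/(s+2)}$ --- that power is exactly the one balancing the entropy term $C_0(2M/\gamma)^s$ against the residual $4L\gamma$.
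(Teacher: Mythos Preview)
Your proof is correct and follows essentially the same route as the paper: reduce from the data-dependent $\bar f$ to the ball $\mathcal{B}_{2M}\subset\mathcal{H}_K$ via the density part of A3 (using $\|\bar f\|_\infty\le M$ from the truncation), take a $\gamma$-net, apply Bernstein pointwise with a union bound, and balance the entropy term against the approximation residual to land on the exponent $1/(s+2)$. The only cosmetic difference is bookkeeping order: the paper ties $\gamma=\varepsilon/4L$ to the deviation level, bounds the tail for $\varepsilon\ge N^{-\tau}$, inverts, and then optimizes $\tau$, whereas you fix $\gamma=\tfrac{1}{4L}N^{-1/(s+2)}$ and $\omega=\gamma$ from the outset and invert Bernstein at level $\delta/\mathcal N$; both yield the same $V(N,\delta)$ and the same residual $N^{-1/(s+2)}$, with constants comfortably inside the stated $12ML$.
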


\begin{proof}
Let $\mathcal{D}_{M}=\{f \in \mathcal{C}(\mathcal{X}), \|f\|_{\infty} \leq M \}$. Under Condition A3, $\mathcal{B}_{2M} \subset \mathcal{H}_{K}$ is dense in $\mathcal{D}_{M}$. Therefore,
for any $\epsilon > 0$, there exists a $g_{\epsilon} \in \mathcal{B}_{2M}$, such that $\| \bar{f} - g_{\epsilon}\|_{\infty} < \epsilon$. By A2, we further have
\[
\ell(\bar{f}, z) - \ell(g_{\epsilon}, z) \leq L\epsilon.
\]
Consequently,
\begin{eqnarray} \nonumber
\mathbb{E}_{z}(\xi_2)-\frac{1}{N}\sum_{i=1}^{N}\xi_2(z_i) &=& \mE(\bar{f}) - \mE(f_{\rho}) - [\mE_S(\bar{f}) - \mE_S(f_{\rho})] \\ \label{Prop2-1}
&\leq&  \mE(g_{\epsilon}) - \mE(f_{\rho}) - [\mE_S(g_{\epsilon}) - \mE_S(f_{\rho})] + 2L\epsilon.
\end{eqnarray}
Let $U_{\gamma} \subset \mathcal{B}_{2M}$ be a cover of $\mathcal{B}_{2M}$ by balls of radius $\gamma$ with respect to $\|.\|_\infty$. With $\epsilon \rightarrow 0$, (\ref{Prop2-1}) implies that
\begin{eqnarray} \nonumber
&&\mathbb{P}\left\{ \mathbb{E}_{z}(\xi_2)-\frac{1}{N}\sum_{i=1}^{N}\xi_2(z_i) \geq \varepsilon \right \} \\ \nonumber
&\leq& \mathbb{P}\left\{\sup_{g \in \mathcal{B}_{2M}} \mE(g) - \mE(f_{\rho}) - [\mE_S(g) - \mE_S(f_{\rho})]  \geq \varepsilon   \right \}  \\  \nonumber
&\leq& \mathbb{P}\left\{\sup_{g \in U_{\gamma}} \mE(g) - \mE(f_{\rho}) - [\mE_S(g) - \mE_S(f_{\rho})] \geq \varepsilon  - 2L\gamma  \right \} \\ \nonumber
&\leq& \mathcal{N}_{\infty}(\mathcal{B}_{2M}, \gamma) \max_{g \in U_{\gamma}} \mathbb{P}\left\{\mE(g) - \mE(f_{\rho}) - [\mE_S(g) - \mE_S(f_{\rho})] \geq \varepsilon  - 2L\gamma  \right \} \\ \label{Prop2-2}
&\leq& \mathcal{N}_{\infty}(\mathcal{B}_{2M}, \gamma) \exp\left\{- \frac{N(\varepsilon  - 2L\gamma )^{2}}{2[9L^{2}M^{2} + 2(\varepsilon  - 2L\gamma)LM]} \right\},
\end{eqnarray}
where the last inequality follows from Lemma \ref{bernstein}. By A3, we have
\begin{eqnarray} \label{Prop2-3}
\mathcal{N}_{\infty}(\mathcal{B}_{2M}, \gamma) = \mathcal{N}_{\infty}(\mathcal{B}_{1}, \gamma/2M) \leq \exp\{C_{0}(2M/\gamma)^{s}  \}.
\end{eqnarray}
Let $\gamma = \varepsilon/4L$. Inequality (\ref{Prop2-2}) together with (\ref{Prop2-3}) further implies that
\begin{eqnarray} \label{Prop2-4}
\mathbb{P}\left\{ \mathbb{E}_{z}(\xi_2)-\frac{1}{N}\sum_{i=1}^{N}\xi_2(z_i) \geq \varepsilon \right \} \leq \exp\left\{C_{0}(\frac{8LM}{\varepsilon})^{s} - \frac{N(\varepsilon )^{2}}{72L^{2}M^{2} + 8\varepsilon LM}  \right\}.
\end{eqnarray}

When $\varepsilon \geq N^{-\tau}$ for some $\tau>0$, (\ref{Prop2-4}) implies that
\begin{eqnarray} \label{Prop2-5}
\mathbb{P}\left\{ \mathbb{E}_{z}(\xi_2)-\frac{1}{N}\sum_{i=1}^{N}\xi_2(z_i) \geq \varepsilon \right \} \leq \exp\left\{C_{0}(8LMN^{\tau})^{s} - \frac{N(\varepsilon )^{2}}{72L^{2}M^{2} + 8\varepsilon LM}  \right\}.
\end{eqnarray}
Denote the right hand side of (\ref{Prop2-5}) by $\delta$. Following the similar arguments in (\ref{prop1-2}) - (\ref{prop1-3}), we have
\begin{eqnarray} \label{Prop2-6}
\mathbb{P}\left\{ \mathbb{E}_{z}(\xi_2)-\frac{1}{N}\sum_{i=1}^{N}\xi_2(z_i) \geq ML\left(\frac{8V(N, \delta) + 6 \sqrt{2V(N, \delta)N}}{N} \right) + N^{-\tau} \right\} \leq \delta,
\end{eqnarray}
where $V(N, \delta)= C_{0}(8LM N^{\tau})^{s} - \log \delta$. The proposition is proved by setting $\tau= 1/(s+2)$, which minimizes the bound order in (\ref{Prop2-6}).
\end{proof}

Based on Propositions \ref{firstbound} and \ref{secondbound}, decomposition (\ref{decomposesampleerror}) implies directly the following probability bound of the sample error.

\begin{theorem}(Sample Error)\label{upperboundsampelerror}
Suppose that Conditions A1-A3 are salified. Let $M' = \max\{2M, \|f -f_{\rho}\|_{\infty}\}$. For any $f \in \mathcal{H}_{K}$ and $0<\delta<1$, we have, with probability at least $1 -\delta$,
\begin{equation} \label{Thm1}
\mE_S(f)-\mE(f) + \mE(\barf) - \mE_S(\barf)\leq 6M'L\left\{\frac{T_1(N, \delta)}{N} + \frac{T_2(N, \delta)}{N^{\frac{1}{2}}} \right\} + \frac{1}{N^{\frac{1}{2+s}}},
\end{equation}
where
\begin{eqnarray*}
T_1(N, \delta) &=&  V(N, \delta/2) +\log(2/\delta), \\
T_2(N, \delta) &=&  \sqrt{V(N , \delta/2)} + \sqrt{\log(2/\delta)}.
\end{eqnarray*}
\end{theorem}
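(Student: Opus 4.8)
The plan is to read off the bound directly from the decomposition (\ref{decomposesampleerror}), which already expresses the sample error $\mE_S(f)-\mE(f)+\mE(\bar f)-\mE_S(\bar f)$ as the sum of a "$\xi_1$ term" $\frac1N\sum_i\xi_1(z_i)-\mathbb{E}_z(\xi_1)$ and a "$\xi_2$ term" $\mathbb{E}_z(\xi_2)-\frac1N\sum_i\xi_2(z_i)$, and then to control the two terms by Proposition~\ref{firstbound} and Proposition~\ref{secondbound} respectively, splitting the failure probability evenly between them.

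Concretely, I would apply Proposition~\ref{firstbound} with confidence level $\delta/2$: on an event $A_1$ of probability at least $1-\delta/2$,
\[
\frac{1}{N}\sum_{i=1}^{N}\xi_1(z_i)-\mathbb{E}_z(\xi_1)\le 2L\|f-f_\rho\|_\infty\left(\frac{\log(2/\delta)}{N}+\sqrt{\frac{\log(2/\delta)}{N}}\right),
\]
and Proposition~\ref{secondbound}, again with $\delta/2$, to obtain an event $A_2$ of probability at least $1-\delta/2$ on which
\[
\mathbb{E}_z(\xi_2)-\frac{1}{N}\sum_{i=1}^{N}\xi_2(z_i)\le 12ML\left(\frac{V(N,\delta/2)+\sqrt{V(N,\delta/2)N}}{N}\right)+N^{-1/(s+2)}.
\]
On $A_1\cap A_2$, which has probability at least $1-\delta$ by a union bound, adding the two displays through (\ref{decomposesampleerror}) already gives a bound of the desired shape; what is left is only to consolidate the constants.

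For that consolidation I would set $M'=\max\{2M,\|f-f_\rho\|_\infty\}$ and use the two elementary inequalities $2L\|f-f_\rho\|_\infty\le 2LM'\le 6LM'$ and $12ML\le 6LM'$ (the latter being exactly where $M'\ge 2M$ is used) to pull the common factor $6M'L$ out of both bounds. After rewriting $\sqrt{V(N,\delta/2)N}/N=\sqrt{V(N,\delta/2)/N}$, the $1/N$-order contributions group into $T_1(N,\delta)=V(N,\delta/2)+\log(2/\delta)$ and the $N^{-1/2}$-order contributions into $T_2(N,\delta)=\sqrt{V(N,\delta/2)}+\sqrt{\log(2/\delta)}$, while the stray $N^{-1/(s+2)}$ from Proposition~\ref{secondbound} supplies the additive $N^{-1/(2+s)}$ term in (\ref{Thm1}).

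I do not expect a real obstacle: the probabilistic work is entirely inside the two propositions (Bernstein's inequality for the i.i.d.\ $\xi_1$ term, and Bernstein combined with a covering-number union bound over $\mathcal{B}_{2M}$ for the data-dependent $\xi_2$ term, the latter being legitimate precisely because every truncated local estimator $\hat f_j$—and hence $\bar f$—lies in $\mathcal{B}_{2M}$). The only points requiring care are the even split of the confidence level and the monotone bounding of the prefactors by $6M'L$, both of which are routine. So the argument reduces to a union bound followed by a bookkeeping step.
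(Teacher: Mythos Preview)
Your proposal is correct and is exactly the argument the paper has in mind: the paper gives no formal proof of Theorem~\ref{upperboundsampelerror} at all, stating only that it follows directly from Propositions~\ref{firstbound} and~\ref{secondbound} via the decomposition~(\ref{decomposesampleerror}). Your write-up simply spells out the union bound with the even $\delta/2$ split and the constant consolidation through $M'=\max\{2M,\|f-f_\rho\|_\infty\}$, which is precisely what is needed to obtain the stated $T_1$, $T_2$, and the additive $N^{-1/(2+s)}$ term.
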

When $\|f- f_{\rho}\|_{\infty}$ is bounded, the leading factor in (\ref{Thm1}) is $\sqrt{V(N, \delta/2)/N}$.
In that case, Theorem \ref{upperboundsampelerror} implies that the sample error (\ref{sampleerror}) has an $O(N^{-1/(2+s)})$ bound in probability.
Under our model setup, this result is general for a broad range of continuous estimators that is bounded above.

%============================================================

\subsubsection{Hypothesis Error Bound}

We now continue our feasibility analysis on the hypothesis error (\ref{hypothesiserror}), which measures the empirical risk difference between $\bar{f}$ and an arbitrary $f$.  When DKR is conducted with $m=1$, $\bar{f}$ corresponds to the single-machine-based kernel learning. By setting $\lambda=0$, the hypothesis error has a natural zero bound by definition.
However, this property is no longer valid for a general DKR with $m>1$.

When $\ell$ is convex, we have (\ref{hypothesiserror}) bounded by
\begin{eqnarray} \nonumber
\mE_{S}(\bar{f}) - \mE_{S}(f) &=& \frac{1}{N}\sum_{i=1}^{N}\ell\left(\frac{1}{m}\sum_{j=1}^{m}\hat{f}_{j},z_{i}\right)
-\frac{1}{N}\sum_{i=1}^{N}\ell\left(f,z_{i}\right) \\ \label{Hyp}
&\leq&  \frac{1}{m}\sum_{j=1}^{m}\left\{\mE_{S}(\hat{f}_{j}) - \mE_{S}(f) \right\}.
\end{eqnarray}
This implies that the hypothesis error of $\bar{f}$ is bounded by a uniform bound of the hypothesis errors over the $m$ sub-estimators.
We formulate this idea as the following theorem.

\begin{theorem}(Hypothesis Error)\label{HypothesisError_Theory}
Suppose that Conditions A1-A3 are satisfied. For any $0<\delta<1$ and $f \in \mathcal{H}_{\mathcal{K}}$, we have, with probability at least $1 - \delta$,
\begin{eqnarray*}
\mE_{S}(\bar{f}) - \mE_{S}(f) \leq 6LM' \left( \frac{T_{1}(n, \delta/2)}{n} + \frac{T_{2}(n, \delta/2) }{n^{\frac{1}{2}}} \right) + \frac{1}{n^\frac{1}{2+s}} +  2\lambda\|f\|_{K}^p,
\end{eqnarray*}
where $M'$, $T_{1}$, and $T_{2}$ are defined in Theorem 1.
\end{theorem}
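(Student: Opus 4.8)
By the convexity estimate \eqref{Hyp} it suffices to bound $\mE_{S}(\hat{f}_{j})-\mE_{S}(f)$ for a generic machine $j$ and then average over $j=1,\dots,m$. I would first peel off the deterministic part. Since $f_{j}$ minimizes the $\lambda$-regularized empirical risk over $\mathcal{H}_{K}$ on $S_{j}$, discarding the non-negative term $\lambda\|f_{j}\|_{K}^{p}$ gives $\mE_{S_{j}}(f_{j})-\mE_{S_{j}}(f)\le\lambda\|f\|_{K}^{p}$; and since every $y_{i}\in[-M,M]$ while $\|f_{\rho}\|_{\infty}\le M$ by A1, the clipping operator $T_{M}$ does not increase the loss, so $\mE_{S_{j}}(\hat{f}_{j})\le\mE_{S_{j}}(f_{j})$. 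Hence $\mE_{S_{j}}(\hat{f}_{j})-\mE_{S_{j}}(f)\le\lambda\|f\|_{K}^{p}$ with no randomness involved, and carrying this term through the decomposition below accounts for the $2\lambda\|f\|_{K}^{p}$ in the statement.

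It remains to transfer the bound from the local segment $S_{j}$ to the full sample $S$. Put $\psi_{j}(z)=\ell(\hat{f}_{j},z)-\ell(f,z)$, so that $\mE_{S_{j}}(\psi_{j})=\mE_{S_{j}}(\hat{f}_{j})-\mE_{S_{j}}(f)$. Since $\hat{f}_{j}$ lies in $\mathcal{B}_{2M}$ after truncation and $2M\le M'$, $\|f-f_{\rho}\|_{\infty}\le M'$, condition A2 gives the uniform bound $|\psi_{j}|\le L\|\hat f_{j}-f\|_{\infty}\le L(2M+\|f-f_{\rho}\|_{\infty})\le 2LM'$. Partitioning $S$ into $S_{j}$ and its complement one has
\[
\mE_{S}(\hat{f}_{j})-\mE_{S}(f)=\bigl[\mE_{S_{j}}(\hat{f}_{j})-\mE_{S_{j}}(f)\bigr]+\tfrac{m-1}{m}\Bigl(\tfrac{1}{N-n}\sum_{z_{i}\notin S_{j}}\psi_{j}(z_{i})-\mE_{S_{j}}(\psi_{j})\Bigr),
\]
and I would split the last difference as $\bigl[\tfrac{1}{N-n}\sum_{z_{i}\notin S_{j}}\psi_{j}(z_{i})-\mathbb{E}_{z}(\psi_{j})\bigr]+\bigl[\mathbb{E}_{z}(\psi_{j})-\mE_{S_{j}}(\psi_{j})\bigr]$. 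Conditionally on $S_{j}$ the function $\psi_{j}$ is fixed and the $N-n$ observations outside $S_{j}$ are i.i.d.\ and independent of it, so the first bracket is controlled by the plain Bernstein inequality of Lemma \ref{bernstein} at scale $N-n\ge n$ and is of lower order. The second bracket is exactly $-\bigl([\mE_{S_{j}}(f)-\mE(f)]+[\mE(\hat f_{j})-\mE_{S_{j}}(\hat f_{j})]\bigr)$, i.e.\ the negative of the sample error of $\hat f_{j}$ on $S_{j}$, which is bounded by the Sample-Error estimate of Theorem \ref{upperboundsampelerror} with $(\bar f,S,N)$ replaced by $(\hat f_{j},S_{j},n)$; this substitution is legitimate because the proof of that theorem --- through Propositions \ref{firstbound} and \ref{secondbound} --- uses only that the estimator satisfies $\|\cdot\|_{\infty}\le M$ (true for $\hat f_{j}$ after truncation) and that the sample is i.i.d.\ (true for $S_{j}$).

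Assembling the pieces, for each fixed $j$ one obtains, with probability at least $1-\delta'$, that $\mE_{S}(\hat{f}_{j})-\mE_{S}(f)\le\lambda\|f\|_{K}^{p}+6LM'\bigl(T_{1}(n,\delta')/n+T_{2}(n,\delta')/n^{1/2}\bigr)+n^{-1/(2+s)}$ plus the lower-order Bernstein remainder (the prefactor $\tfrac{m-1}{m}<1$ in front of the two main terms only helps). To make this simultaneous over all $m$ machines I would take a union bound, shifting the confidence parameter by a factor $m$; since $m\le N$ this costs only a $\log m$ inside $V(n,\cdot)$, which is dominated by $n^{-1/(2+s)}$ and can be absorbed, leaving the $\delta/2$ of the statement. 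Averaging the per-machine bounds through \eqref{Hyp} and consolidating the regularization contribution into $2\lambda\|f\|_{K}^{p}$ then yields the asserted inequality.

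\medskip
\noindent\textbf{Main obstacle.} The genuinely delicate term is the ``own-segment'' quantity $\mathbb{E}_{z}(\psi_{j})-\mE_{S_{j}}(\psi_{j})$: there $\psi_{j}$ depends on $S_{j}$, so the summands are \emph{not} independent and Bernstein cannot be applied directly; one must instead pass to a supremum over $g\in\mathcal{B}_{2M}\ni\hat f_{j}$ and invoke the covering-number (capacity) argument behind Proposition \ref{secondbound}. This is precisely what forces the effective sample size down from $N$ to $n=N/m$, hence the requirement that $m$ not be too large; and because $m$ may itself grow with $N$ in the big-data regime, the union bound over the $m$ sub-estimators must be arranged so that its $\log m$ overhead stays below $n^{-1/(2+s)}$ --- this is exactly the ``uniform convergence over the $m$ segments'' issue flagged in the introduction.
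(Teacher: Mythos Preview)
Your plan matches the paper's proof: both reduce via \eqref{Hyp}, bound the own-segment part $\mE_{S_j}(\hat f_j)-\mE_{S_j}(f)\le\lambda\|f\|_K^p$, and control the transfer to the full sample by exactly the decomposition you describe---your two brackets coincide with the paper's $u_{1j}+u_{2j}$ and $u_{3j}+u_{4j}$, the only difference being that the paper inserts $\pm\ell(f_\rho,\cdot)$ as a pivot and applies Propositions~\ref{firstbound}--\ref{secondbound} term by term where you invoke Theorem~\ref{upperboundsampelerror} at scale $n$ in one stroke. Two small points: your second bracket equals $[\mE_{S_j}(f)-\mE(f)]+[\mE(\hat f_j)-\mE_{S_j}(\hat f_j)]$, which is the sample error itself rather than its negative (a sign slip that does not affect your subsequent use of Theorem~\ref{upperboundsampelerror}); and the union bound over the $m$ machines that you flag is in fact glossed over in the paper's own proof, so your treatment is if anything more careful---though to land exactly on $\delta/2$ rather than $\delta/(2m)$ you would need to absorb $\log m$ into the leading $n^{s/(s+2)}$ term of $V(n,\cdot)$ (not into $n^{-1/(2+s)}$ as you wrote).
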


\begin{proof}
Without loss of generality, we prove the theorem for $\bar{f}$ with $m>1$. Recall that DKR spilt $S$ into $m$ segments $S_{1}, \ldots, S_{m}$. Let $S/S_{j}$ be the sample set with $S_{j}$ removed from $S$ and $\mE_{Q}=\sum_{z_{i} \in Q} \ell(f, z_{i})/q$ be the empirical risk for a sample set $Q$ of size $q$. Under A2, we have $\ell$ is convex and thus
\begin{eqnarray} \nonumber
\mE_{S}(\bar{f}) - \mE_{S}(f) &\leq&  \frac{1}{m}\sum_{j=1}^{m}\left\{\mE_{S}(\hat{f}_{j}) - \mE_{S}(f) \right\} \\ \nonumber
&=& \frac{1}{m}\sum_{j=1}^{m}\left[\frac{m}{N}(\mathcal{E}_{S_{j}}(\hat{f}_{j}) -\mathcal{E}_{S_{j}}(f))+\frac{N-m}{N}(\mathcal{E}_{S/S_{j}}(\hat{f}_{j})-\mathcal{E}_{S/S_{j}}(f))\right] \\  \label{Thm2-1}
&=& \frac{1}{m}\sum_{j=1}^{m}\left[\frac{m}{N}B_{j} + \frac{N-m}{N} U_{j} \right],
\end{eqnarray}
where $B_{j} =(\mathcal{E}_{S_{j}}(\hat{f}_{j}) -\mathcal{E}_{S_{j}}(f))$ and $U_{j}=(\mathcal{E}_{S/S_{j}}(\hat{f}_{j})-\mathcal{E}_{S/S_{j}}(f))$.

Let us first work on the first term of (\ref{Thm2-1}). By definition of $\hat{f}_{j}$, we know that
\[
\mathcal{E}_{S_{j}}(\hat{f}_{j}) + \lambda\|\hat{f}_{j}\|_{K}^p  \leq \mathcal{E}_{S_{j}}(f_{j}) + \lambda\|f_{j}\|_{K}^p  \leq  \mathcal{E}_{S_{j}}(f) + \lambda\|f\|_{K}^p
\]
Therefore,
\begin{eqnarray} \label{Thm2-2}
B_{j} = \mathcal{E}_{S_{j}}(\hat{f}_{j}) -\mathcal{E}_{S_{j}}(f)) \leq \lambda\|f\|_{K}^p -  \lambda\|\hat{f}_{j}\|_{K}^p  \leq \lambda\|f\|_{K}^p.
\end{eqnarray}
This implies that the first term of (\ref{Thm2-1}) is bounded by $m \lambda\|f\|_{K}^p/N$.

We now turn to bound the second term of (\ref{Thm2-1}). Specifically, we further decompose $U_{j}$ by
\begin{eqnarray*}
U_{j} &=& u_{1j} + u_{2j} + u_{3j} + u_{4j} + B_{j} \\
&\leq& u_{1j} + u_{2j} + u_{3j} + u_{4j} + \lambda\|f\|_{K}^p,
\end{eqnarray*}
where
\begin{eqnarray*}
u_{1j} &=&  \mathcal{E}_{S/S_{j}}(\hat{f}_{j})-\mathcal{E}_{S/S_{j}}(f_{\rho})-\mathcal{E}(\hat{f}_{j})+\mathcal{E}(f_{\rho}) \\
u_{2j} &=&  \mathcal{E}(f)-\mathcal{E}(f_{\rho})-\mathcal{E}_{S/S_{j}}(f)+\mathcal{E}_{S/S_{j}}(f_{\rho})\\
u_{3j} &=&  \mathcal{E}_{S_{j}}(f)-\mathcal{E}_{S_{j}}(f_{\rho})+\mathcal{E}(f_{\rho})-\mathcal{E}(f) \\
u_{4j} &=&  \mathcal{E}(\hat{f}_{j})-\mathcal{E}(f_{\rho})-\mathcal{E}_{S_{j}}(\hat{f}_{j})+\mathcal{E}_{S_{j}}(f_{\rho})
\end{eqnarray*}
Note that $\hat{f}_{j}$ is independent of $S/S_{j}$. Proposition \ref{firstbound} readily implies that, with probability at least $1- \delta$,
\begin{eqnarray*}
u_{1j} &\leq& 4LM \left(\frac{\log(1/\delta)}{N-n}+\sqrt{\frac{\log(1/\delta)}{N-n}}\right), \\
u_{2j} &\leq& 2L \|f - f_{\rho}\|_{\infty} \left(\frac{\log(1/\delta)}{N-n}+\sqrt{\frac{\log(1/\delta)}{N-n}}\right), \\
u_{3j} &\leq& 2L \|f - f_{\rho}\|_{\infty} \left(\frac{\log(1/\delta)}{n}+\sqrt{\frac{\log(1/\delta)}{n}}\right).
\end{eqnarray*}
Also, by applying Proposition \ref{secondbound} with $m=1$, we have, with probability at least $1- \delta$,
\[
u_{4j} \leq  12ML\left(\frac{V(n, \delta) +  \sqrt{V(n, \delta)n}}{n} \right) + n^{- 1/(s+2)},
\]
with the same $V$ defined in Proposition \ref{secondbound}. Consequently, we have, with probability at least $1- \delta$,
\begin{eqnarray}\nonumber
 U_{j} \leq 6LM' \left( \frac{V(n, \delta/4) + \log(4/\delta)}{n} + \frac{\sqrt{\log 4/\delta} + \sqrt{V(n, \delta/4)} }{n^{\frac{1}{2}}} \right) + \frac{1}{n^\frac{1}{2+s}} +  \lambda\|f\|_{K}^p, \\ \label{Thm2-3}
\end{eqnarray}
where $M' = \max\{2M,  \|f - f_{\rho}\|_{\infty}\}$.

Inequalities (\ref{Thm2-2}) and (\ref{Thm2-3}) further imply that, with probability at least $1- \delta$
\begin{eqnarray}\nonumber
\mE_{S}(\bar{f}) - \mE_{S}(f) \leq 6LM' \left( \frac{T_{1}(n, \delta/2)}{n} + \frac{T_{2}(n, \delta/2) }{n^{\frac{1}{2}}} \right) + \frac{1}{n^\frac{1}{2+s}} +  2\lambda\|f\|_{K}^p.
\end{eqnarray}
The theorem is therefore proved.
\end{proof}
Theorem \ref{HypothesisError_Theory} implies that, with appropriate $f$ and $\lambda$, the hypothesis error of DKR has an $O(n^{-1/(2+s)})$ bound in probability. This results is applicable to a general $\bar{f}$ with $m\geq1$, which incorporates the diverging $m$ situations.

\subsection{Generalization Bound of DKR} \label{Gbound}

With the aid of Theorems 1-2, we obtain a probability bound for the generalization error of $\bar{f}$ as the following theorem.

\begin{theorem}(Generalization Error)\label{maintheorem}
Suppose that Conditions A1-A3 are satisfied.  When $N$ is sufficiently large, for any $0<\delta<1$,
\begin{eqnarray}\nonumber
\mE(\bar{f})-\mE(f_{\rho}) \leq 24LM \left( \frac{T_{1}(n, \delta/4)}{n} + \frac{T_{2}(n, \delta/4) }{n^{\frac{1}{2}}} \right) + \frac{2 + L}{n^\frac{1}{2+s}} +  2\lambda\|f_{0}\|_{K}^p
\end{eqnarray}
with probability at least $1- \delta$, where $f_{0} \in \mathcal{H}_{\mathcal{K}}$ and $\|f_{0} - f_{\rho}\|_{\infty}\leq N^{-1}$.
\end{theorem}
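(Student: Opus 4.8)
The plan is to instantiate the error decomposition \eqref{sampleerror}--\eqref{approximationerror} at one carefully chosen element of $\mathcal{H}_K$ and then simply add up the three bounds already in hand. Concretely, I would take $f = f_0 \in \mathcal{H}_K$ with $\|f_0 - f_\rho\|_\infty \le N^{-1}$; such an $f_0$ exists by the density statement in Condition A3 (the same $f$ used informally for the approximation error above the sample-error subsection). For $N$ large enough that $N^{-1} \le 2M$ — this is exactly what ``$N$ sufficiently large'' buys us — this choice gives $M' = \max\{2M, \|f_0 - f_\rho\|_\infty\} = 2M$ simultaneously in Theorem \ref{upperboundsampelerror} and Theorem \ref{HypothesisError_Theory}, so that both $6M'L$ factors become $12LM$.

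First I would dispose of the approximation error \eqref{approximationerror}: by Condition A2, $\mE(f_0) - \mE(f_\rho) = \mathbb{E}_z[\ell(f_0,z) - \ell(f_\rho,z)] \le L\|f_0 - f_\rho\|_\infty \le L/N$, and since $n \le N$ and $1/(2+s) \le 1$ we have $L/N \le L\,n^{-1/(2+s)}$. This is the term that contributes the ``$L$'' inside $(2+L)/n^{1/(2+s)}$.

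Next, I would apply Theorem \ref{upperboundsampelerror} with confidence parameter $\delta/2$ to bound the sample error \eqref{sampleerror}, and Theorem \ref{HypothesisError_Theory} with confidence parameter $\delta/2$ to bound the hypothesis error \eqref{hypothesiserror}; by a union bound both bounds hold simultaneously with probability at least $1 - \delta$. Theorem \ref{upperboundsampelerror} is stated in terms of $N$, so I would replace its $N$-indexed summands by their $n$-counterparts: since $N \ge n$ and $1/(2+s) \le 1$, and since $V(N,\delta)/N = C_0(8LM)^s N^{-2/(s+2)} - (\log\delta)/N$ and hence also $\sqrt{V(N,\delta)/N}$ are decreasing in $N$, every term of the sample-error bound is dominated by the corresponding term evaluated at $n$, giving a sample-error bound of $12LM\big(T_1(n,\delta/4)/n + T_2(n,\delta/4)/n^{1/2}\big) + n^{-1/(2+s)}$. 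Theorem \ref{HypothesisError_Theory} with parameter $\delta/2$ contributes $12LM\big(T_1(n,\delta/4)/n + T_2(n,\delta/4)/n^{1/2}\big) + n^{-1/(2+s)} + 2\lambda\|f_0\|_K^p$. Summing the sample, hypothesis, and approximation bounds: the two $12LM$ factors combine to $24LM$, the three $n^{-1/(2+s)}$ contributions combine to $(2+L)\,n^{-1/(2+s)}$, and $2\lambda\|f_0\|_K^p$ carries over unchanged, which is precisely the claimed inequality.

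The substantive work is already finished: the one-sided Bernstein estimate (Lemma \ref{bernstein}), the covering-number control of $\sup_{g \in \mathcal{B}_{2M}}$ (Proposition \ref{secondbound}), and the handling of the data-dependence of $\bar f$ via the $S/S_j$ splitting (Theorem \ref{HypothesisError_Theory}). So the only real obstacle in this last step is bookkeeping: one must be careful that splitting the confidence level as $\delta/2 + \delta/2$ propagates correctly through the definitions of $T_1, T_2, V$, and one must verify the monotonicity claims that justify rewriting the $N$-dependent sample-error bound uniformly in $n$ with exactly the stated constants. Neither is deep, but getting the constants to land on $24LM$ and $(2+L)$ requires writing these comparisons out.
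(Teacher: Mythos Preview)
Your proposal is correct and follows essentially the same route as the paper's own proof: choose $f=f_0\in\mathcal{H}_K$ with $\|f_0-f_\rho\|_\infty\le N^{-1}$ (whence $M'=2M$ for large $N$), bound the approximation error by $L/N\le L\,n^{-1/(2+s)}$, and combine Theorems \ref{upperboundsampelerror} and \ref{HypothesisError_Theory} at confidence $\delta/2$ each. You are actually more careful than the paper, which simply calls the result ``direct'' without spelling out the monotonicity of $T_1(N,\cdot)/N$, $T_2(N,\cdot)/N^{1/2}$, and $N^{-1/(2+s)}$ needed to replace $N$ by $n$ in the sample-error bound.
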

\begin{proof}
Under Conditions A1 and A3, for any $N\geq1$, there exists a $f_{0} \in \mathcal{H}_{\mathcal{K}}$ such that $\|f_{0} - f_{\rho}\|<N^{-1}$. Under A2, this also implies that (\ref{approximationerror}) is bounded by $L/N \leq L/n^{1/(2+s)}$.
Clearly, when $N$ is sufficiently large, $M'= \max(2M, \|f_{0} - f_{\rho}\| ) = 2M$.
The theorem is a direct result by applying Theorems \ref{upperboundsampelerror}-\ref{HypothesisError_Theory} to (\ref{sampleerror}) and (\ref{hypothesiserror}) with $f=f_{0}$.
\end{proof}

Theorem \ref{maintheorem} suggests that, if we set $\lambda = o(\|f_{0}\|_{K}^{-p} n^{-1/(2+s)})$,
the generalization error of $\bar{f}$ is bounded by an $O(n^{-1/(2+s)})$ term in probability.
In other words, as $n\rightarrow\infty$, a properly tuned DKR leads to an estimator that achieves the oracle predictive power.
This justifies the feasibility of using divide-and conquer strategy for the kernel-based regression analysis.
Under the assumption that $f_{\rho} \in \mathcal{H}_{\mathcal{K}}$, we have $f_{0}=f_{\rho}$ and thus $\bar{f}$ is feasible with $\lambda=o(n^{-1/(2+s)})$.
Moreover, when DKR is conducted with Gaussian kernels, Condition A3 is satisfied with any $s>0$ and thus $\mE(\bar{f})$ enjoys a nearly $O_{p}(n^{-1/2})$ convergence rate to $\mE(f_{\rho})$.

Theorem \ref{maintheorem} provides theoretical support for the distributed learning framework (Algorithm 1).
It also reveals that the convergence rate of $\mE(\bar{f})$ is related to the scale of local sample size $n$.
This seems to be reasonable, because $\hat{f}_{j}$ is biased from $f_{\rho}$ under a general setup.
The individual bias of $\hat{f}_{j}$ may diminish as $n$ increase. It, however, would not be balanced off by taking the average of $\hat{f}_{j}$s for $j=1, \ldots, m$.
As a result, the generalization bound of $\bar{f}$ is determined by the largest bias among the $m$ $\hat{f}_{j}$s.
When $\hat{f}_{j}$ is (nearly) unbiased, its generalization performance is mainly affected by its variance. In that case, $\bar{f}$ is likely to achieve a faster convergence rate by averaging over $\hat{f}_{j}$s. We use the following corollary to show some insights on this point.

\begin{corollary} \label{cor1}
Suppose that DKR is conducted with the quadratic loss and $\lambda=0$. If $\mathbb{E}[\hat{f}_{j}(x) - f_{\rho}(x)]=0$ for any $x \in \mathcal{X}$, then under Conditions A1-A3, we have
\[
\mE(\bar{f})-\mE(f_{\rho}) = O_{p}\left(\frac{1}{mn^{\frac{1}{2+s}}}\right).
\]
\end{corollary}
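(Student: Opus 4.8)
The plan is to exploit the special structure of the quadratic loss, under which the excess risk equals a squared $L_2$ distance, and then to combine the zero-bias hypothesis with the mutual independence of the $m$ local estimators to produce the variance-reduction factor $1/m$ that is absent from the generic bound in Theorem~\ref{maintheorem}.

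First, I would record the standard identity for the quadratic loss: since $f_\rho(x) = \mathbb{E}(Y\mid X = x)$, every bounded $f$ satisfies $\mE(f) - \mE(f_\rho) = \mathbb{E}_X[(f(X) - f_\rho(X))^2] = \|f - f_\rho\|_2^2$. Writing $g_j = \hat f_j - f_\rho$ and $\bar f = \frac{1}{m}\sum_{j=1}^m \hat f_j$, this turns the quantity of interest into
\[
\mE(\bar f) - \mE(f_\rho) = \mathbb{E}_X\Big[\Big(\frac{1}{m}\sum_{j=1}^m g_j(X)\Big)^2\Big].
\]

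Next, I would take expectation over the data $S$. Because $S$ is split evenly and at random, the segments $S_1,\dots,S_m$ are i.i.d.\ samples from $\rho$, so $\hat f_1,\dots,\hat f_m$, and hence $g_1,\dots,g_m$, are i.i.d.\ random functions; the hypothesis $\mathbb{E}[\hat f_j(x)-f_\rho(x)]=0$ means $\mathbb{E}[g_j(x)]=0$ for every $x$. Expanding the square, applying Fubini, and using $\mathbb{E}[g_j(x)g_k(x)] = \mathbb{E}[g_j(x)]\,\mathbb{E}[g_k(x)] = 0$ for $j\neq k$, the cross terms drop out and
\[
\mathbb{E}_S\big[\mE(\bar f) - \mE(f_\rho)\big] = \frac{1}{m^2}\sum_{j=1}^m \mathbb{E}_X\mathbb{E}_S\big[g_j(X)^2\big] = \frac{1}{m}\,\mathbb{E}_{S_1}\big[\mE(\hat f_1) - \mE(f_\rho)\big].
\]
It then remains to bound the single-machine excess risk $\mathbb{E}_{S_1}[\mE(\hat f_1) - \mE(f_\rho)]$. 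Since $\hat f_1$ is precisely the DKR output with $m=1$ on a sample of size $n$ and $\lambda=0$, Theorem~\ref{maintheorem} (through Theorems~\ref{upperboundsampelerror}--\ref{HypothesisError_Theory}, with $f = f_0$, $\|f_0 - f_\rho\|_\infty \le n^{-1}$, and the penalty term $2\lambda\|f_0\|_K^p$ vanishing) gives $\mE(\hat f_1) - \mE(f_\rho) = O_p(n^{-1/(2+s)})$. Because $\hat f_1 = T_M[f_1]$ and $\|f_\rho\|_\infty \le M$, the nonnegative quantity $\mE(\hat f_1) - \mE(f_\rho) = \|\hat f_1 - f_\rho\|_2^2$ is bounded by $4M^2$; integrating the exponential tail supplied by Theorems~\ref{upperboundsampelerror}--\ref{HypothesisError_Theory} over the confidence level upgrades the $O_p$ statement to $\mathbb{E}_{S_1}[\mE(\hat f_1) - \mE(f_\rho)] = O(n^{-1/(2+s)})$. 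Substituting into the last display gives $\mathbb{E}_S[\mE(\bar f) - \mE(f_\rho)] = O(m^{-1} n^{-1/(2+s)})$, and Markov's inequality (valid since the excess risk is nonnegative) yields $\mE(\bar f) - \mE(f_\rho) = O_p(m^{-1} n^{-1/(2+s)})$.

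The step I expect to be the main obstacle is the conversion in the last paragraph of the high-probability bound of Theorem~\ref{maintheorem} into a bound in expectation: this requires the uniform boundedness coming from the truncation $T_M$, plus a careful check that integrating the Bernstein-type tail contributes only $\delta$-dependent terms of orders $n^{-1/2}$ and $n^{-1}$, both of which are dominated by $n^{-1/(2+s)}$ as soon as $s>0$. A secondary, more bookkeeping point is justifying that an even random partition renders the segments $S_j$ genuinely independent (equivalently, distributed as consecutive blocks of an i.i.d.\ sequence), which is exactly what makes the cross terms in the second step vanish identically rather than merely to leading order.
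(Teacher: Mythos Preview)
Your proposal is correct and follows essentially the same route as the paper: the quadratic-loss identity $\mE(f)-\mE(f_\rho)=\|f-f_\rho\|_{\rho_X}^2$, the variance-reduction step using unbiasedness and independence of the $\hat f_j$ to obtain $\mathbb{E}_S[\mE(\bar f)-\mE(f_\rho)]=m^{-1}\,\mathbb{E}_{S_1}[\mE(\hat f_1)-\mE(f_\rho)]$, then Theorem~\ref{maintheorem} with $m=1$ and $\lambda=0$ integrated over the confidence level to pass from a high-probability bound to a bound in expectation, and finally Markov. The paper performs the tail integration explicitly as $\int_0^\infty \mathbb{P}\{\mE(\hat f_1)-\mE(f_\rho)>t\}\,dt \le \int_0^\infty 8\exp\{-C^{-1}n^{1/(2+s)}t\}\,dt$, which is exactly the step you flagged as the main obstacle.
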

\begin{proof}
Let $\rho_{X}$ be the marginal distribution of $X$. When the quadratic loss is used, we have
\begin{eqnarray}\label{cor1-1}
\mE(\bar{f})-\mE(f_{\rho}) = \|\bar{f} -  f_{\rho}\|^{2}_{\rho_{X}} = \int_{\mathcal{X}}  (\bar{f}(X) -  f_{\rho}(X))^{2}    d \rho_{X}
\end{eqnarray}
Since we assume $\mathbb{E}[\hat{f}_{j}(x)] = f_{\rho}(x)$ for any $x\in \mathcal{X}$, (\ref{cor1-1}) implies that
\begin{eqnarray}\nonumber
\mathbb{E}[\mE(\bar{f})-\mE(f_{\rho})] &=&  \int_{S} \int_{\mathcal{X}} (\bar{f}(X) -  f_{\rho}(X))^{2}    d \rho_{X} d\rho \\ \nonumber
&=&  \int_{\mathcal{X}} (\mathbb{E}[\bar{f}(X)- f_{\rho}(X)])^{2}   d \rho_{X} +  \int_{\mathcal{X}} \mathbb{E}[\bar{f}(X)- f_{\rho}(X)]^2   d \rho_{X} \\ \nonumber
&=&  \frac{1}{m} \int_{\mathcal{X}} \mathbb{E} [\hat{f}_{1}(X) - f_{\rho}(X)]^{2}  d \rho_{X} \\ \label{cor1-2}
&=&  \frac{1}{m} \mathbb{E}[ \mE(\hat{f}_{1})-\mE(f_{\rho})  ].
\end{eqnarray}
Applying Theorem \ref{maintheorem} with $m=1$ and $\lambda=0$, we have, for some generic constant $C>0$,
\begin{eqnarray} \label{cor1-3}
\mathbb{P}\left\{\mE(\hat{f}_{1})-\mE(f_{\rho}) > C \log(8/\delta) {n^{-\frac{1}{2+s}}} \right\} \leq \delta
\end{eqnarray}
Let $t=  C \log(8/\delta) {n^{-\frac{1}{2+s}}}$. Inequality (\ref{cor1-3}) implies that
\begin{eqnarray*}\nonumber
\mathbb{E}[\mE(\hat{f}_{1}) - \mE(f_{\rho})] &=& \int_{0}^{\infty} \mathbb{P}\left\{\mE(\hat{f}_{1})-\mE(f_{\rho}) > t\right\} dt \\
&\leq& \int_{0}^{\infty} 8 \exp\left\{- C^{-1}n^{\frac{1}{2+s}}t \right\} dt \\
&\leq& 8 C n^{-\frac{1}{2+s}}.
\end{eqnarray*}
This together with (\ref{cor1-2}) implies that $\mathbb{E}[\mE(\bar{f})-\mE(f_{\rho})] = O(m^{-1}n^{-\frac{1}{2+s}})$, which further implies the corollary.
\end{proof}

Corollary \ref{cor1} is only conceptual, because it is usually difficult to construct an unbiased $\hat{f}_{j}$ without strong prior knowledge.
Nevertheless, it sheds light on designing more efficient DKR with less biased sub-estimators.
In practice, this may be conducted by choosing a small $\lambda$ or using some debiasing techniques in Algorithm 1.
In this paper, we focus on providing a general feasibility support for DKR and leave this issue for the future research.

It should also be noted that, under Theorem \ref{maintheorem}, DKR is feasible only when $n \rightarrow \infty$ or equivalently $m =o(N)$.
This means that, to have DKR work well, the sample size in each local machine should be large enough. This seems to be a natural condition, because for a large-$m$-small-$n$ situation, each local output $\hat{f}_{j}$ is unlikely to provide a meaningful estimate.
As a consequence, the global estimation $\bar{f}_{\lambda }$ may not be well constructed neither. In real applications, an appropriate $m$ should be used such that the associated DKR achieves a good balance of algorithmic accuracy and computational efficiency.

\section{Numerical Studies}\label{sectoinapplication}

We evaluate the finite sample performance of DKR through both simulation and real
data examples. In particular, we assess the distributive strategy for several popular regression methods in terms of both computational efficiency and generalization capability. All numerical studies are implemented by MATLAB 8.2 on a windows workstation with 8-core 3.07GHz CPUs.

\begin{figure}[t]
  \centering
  % Requires \usepackage{graphicx}
  \includegraphics[width= 10cm]{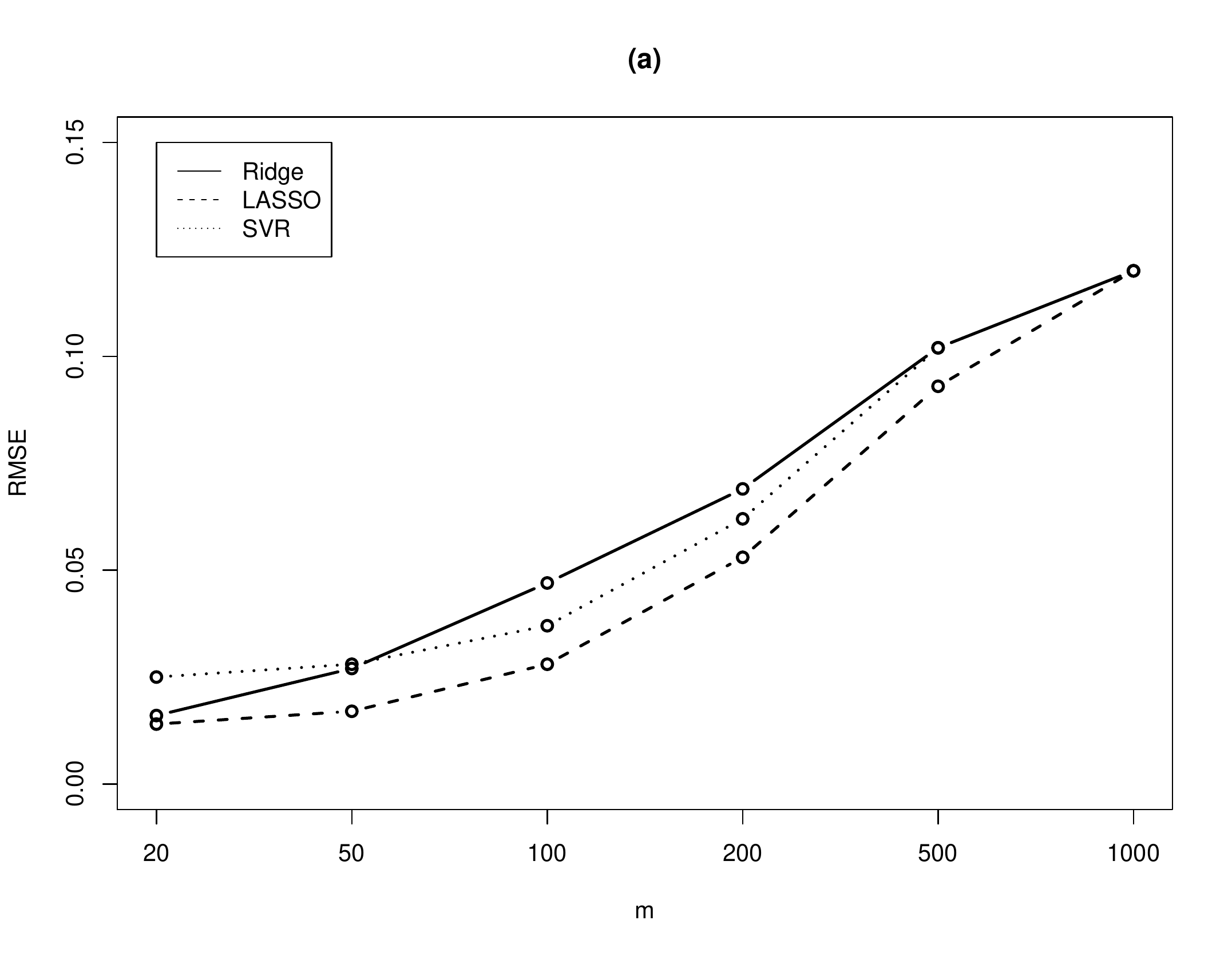} \\
  \includegraphics[width= 10cm]{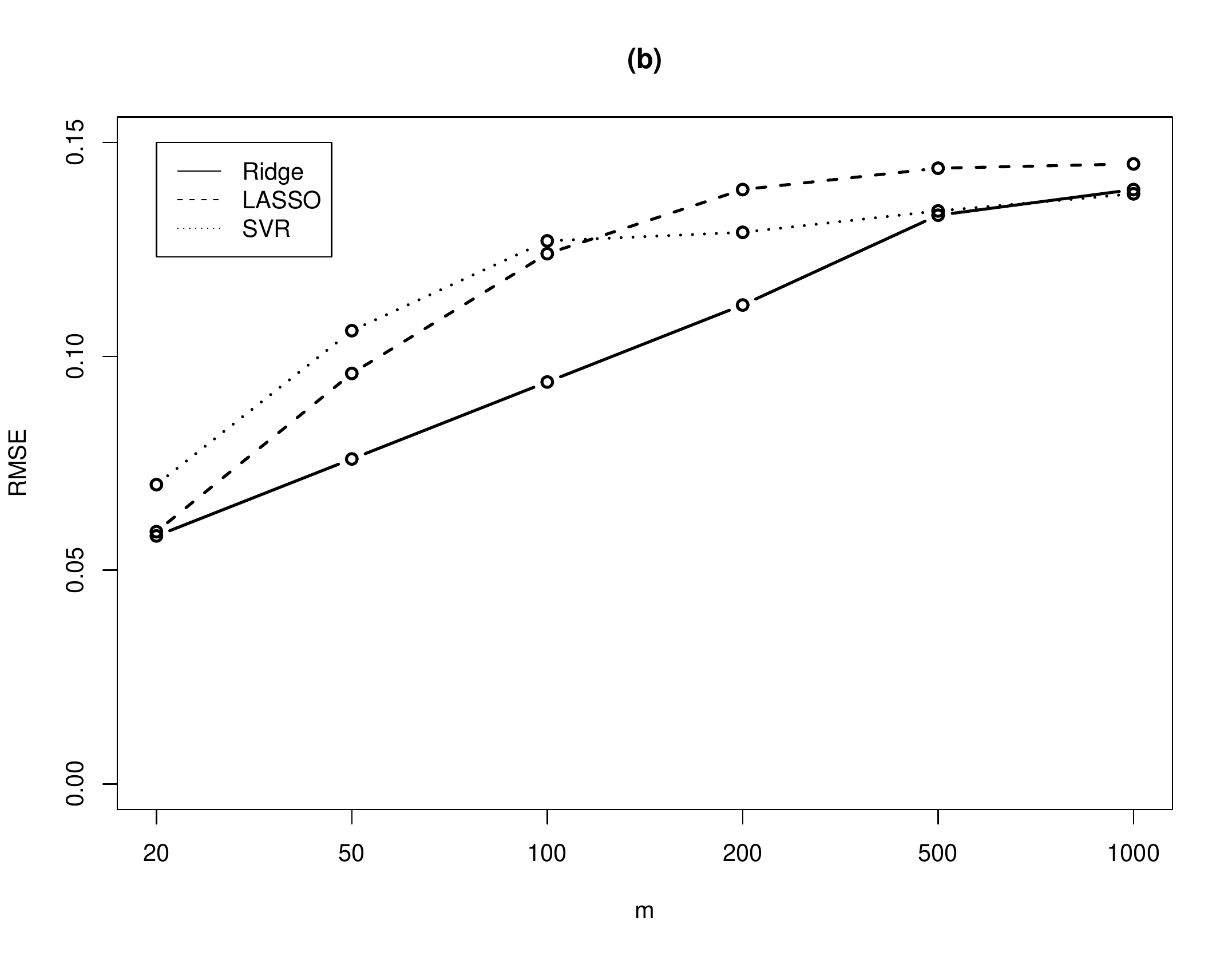}
  \caption{The generalization performance of DKR in Case (i). Plot (a): RMSE for $\bar{f}$; Plot (b): RMSE for $\hat{f}_{1}$.} \label{Fig3}
\end{figure}

\begin{figure}[h]
  \centering
  % Requires \usepackage{graphicx}
  \includegraphics[width= 10cm]{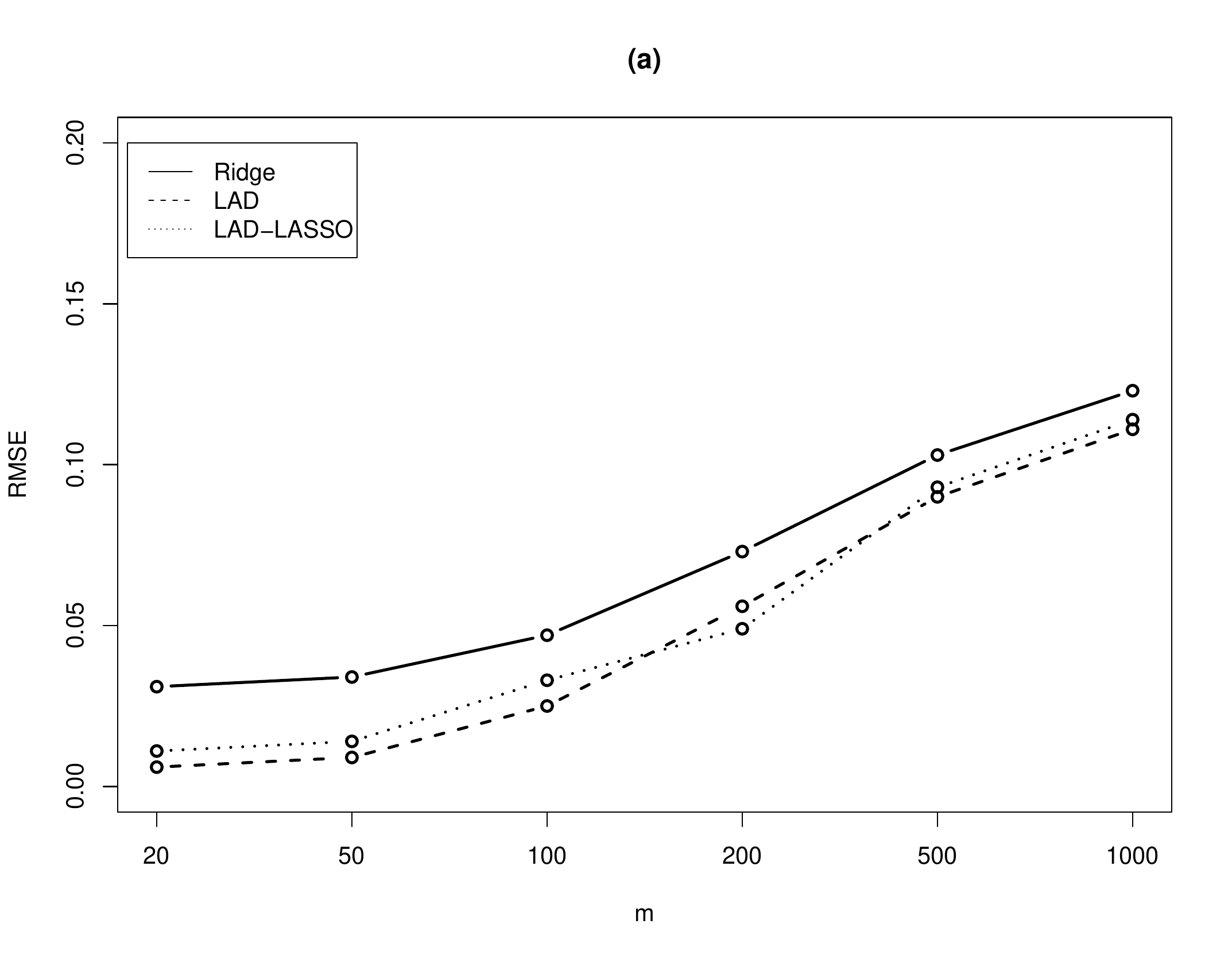} \\
   \includegraphics[width= 10cm]{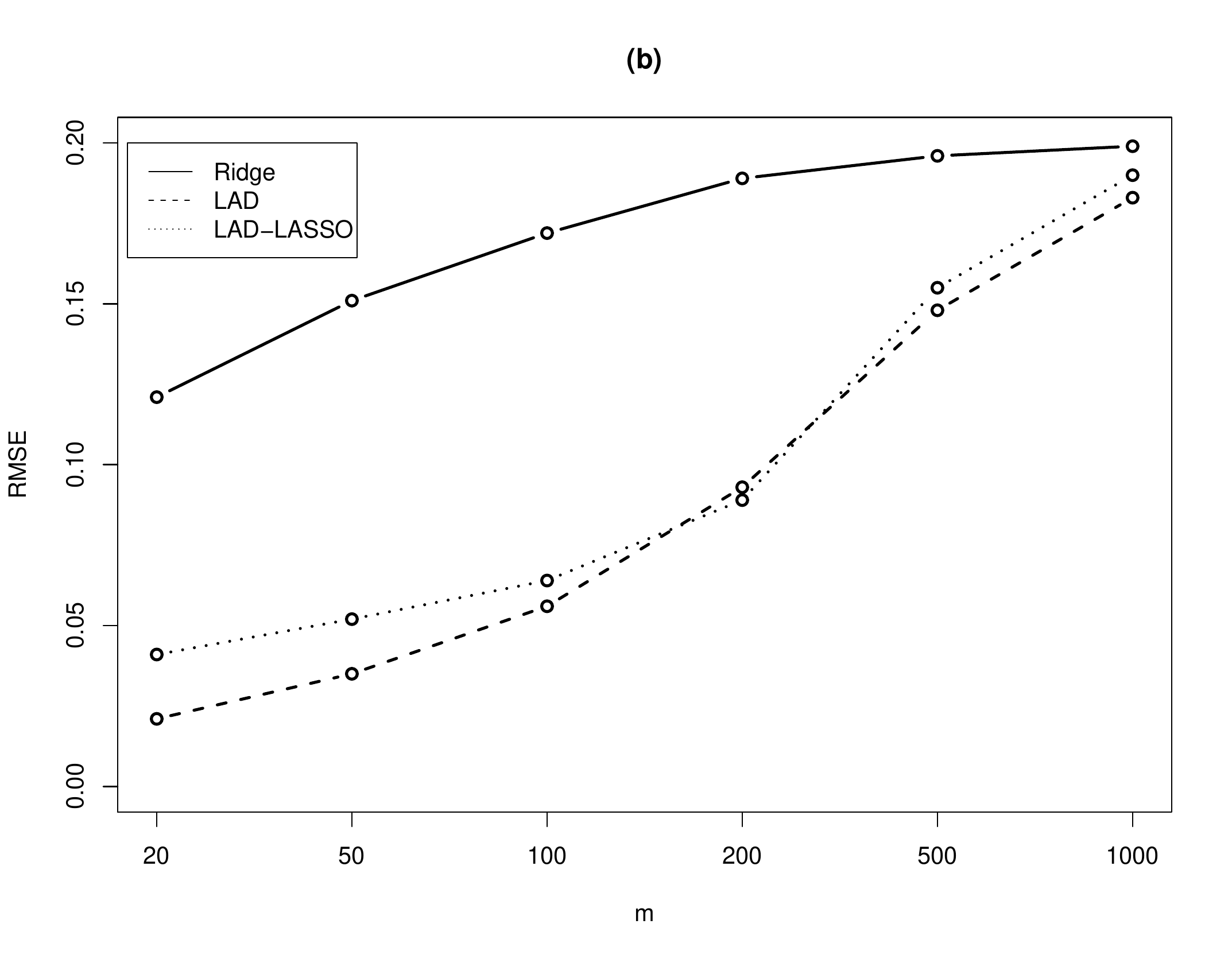}
  \caption{The generalization performance of DKR in Case (ii). Plot (a): RMSE for $\bar{f}$; Plot (b): RMSE for $\hat{f}_{1}$. } \label{Fig4}
\end{figure}

\subsection{Simulation}

In the simulation studies, we assess DKR on a hypothetical learning problem with $d = 2$.
Specifically, we generate independent observations based on model
\begin{eqnarray} \label{datagen}
Y =  \mbox{sinc}(20X_{1}-10) \times \mbox{sinc}(20X_{2}-10) + \epsilon,
\end{eqnarray}
where $(X_{1}, X_{2})$ denotes the two attributes of covariate $X$, $\epsilon$ is an observational noise, and
\[
\mbox{sinc}(x) =  \left\{
\begin{array}{cc}
\frac{\sin (x)}{x}, & x \neq 0\\
1, & x= 0
\end{array}.
\right.
\]
The values of $(X_1, X_2)$ are sampled based on a uniform distribution on $[0, 1] \times [0, 1]$.

We evaluate DKR based on model (\ref{datagen}) under two cases: (i) we set $N=100,000$ and generate data with $\epsilon \sim N(0, 0.2)$;
(ii) we generate $N_{1}= 80,000$ samples with $\epsilon \sim N(0, 0.1)$ and $N_{2}=20,000$ samples with $\epsilon \sim U[-2, 2]$.
The second case is designed such that the data contain about 20\% outliers. This setup poses further challenges for DKR in learning the relationship between $Y$ and $X$.

Regarding the implementation of DKR, we set the number of partition $m= 20, 50, 100, 200, 500$, and $1000$, so that the minimum sample size in each local machine is 100.
We set the thresholding value $M=1$ and build the dictionary $\mathcal{H}_{K}$ by the popular Gaussian kernel
\begin{equation} \label{gaussian}
K(x_{1}, x_{2})= \exp (- \left\Vert x_{1} - x_{2} \right\Vert _{2}^{2} /  \tau^{2} )
\end{equation}
with $\tau=0.05$.
In Case (i), we conduct DKR with three popular regression methods under framework (\ref{RERM}): ridge regression ($L_{2}$-loss plus $L_{2}$-regularization), LASSO ($L_{2}$-loss plus $L_{1}$-regularization), and SVR ($\varepsilon$-intensive-loss plus $L_{2}$-regularization); in Case (ii), we conduct DKR based on two robust regression methods: LAD ($L_{1}$-loss plus $L_{2}$-regularization) and LAD-LASSO ($L_{1}$-loss plus $L_{1}$-regularization).
In our simulations, we choose the tuning parameter $\lambda$ based on a few pilot runs of DKR with $m=20$ and use the standard MATLAB packages for computing the corresponding regression estimators.

To assess the generalization capability of DKR, we generate an independent testing set $\{(\tilde{y}_{i}, \tilde{x}_{i}), \ i=1, \ldots, n_{t}\}$ of size $n_t = 5000$ from model (\ref{datagen}) with $\epsilon =0$ and compute
\[
\mbox{RMSE}(\bar{f}) =  \left\{ \frac{1}{n_{t}} \sum \limits_{i=1}^{n_{t}} \left\vert \bar{f}(\tilde{x}_{i}) - \tilde{y}_{i} \right\vert ^{2} \right\}^{1/2}.
\]
We report the averaged RMSE of DKR for each setup based on 100 repetitions. For comparison, we also report the RMSE of the corresponding standard (non-distributive) regression method based on $1/m$ of the data.

\begin{table}[t]
\centering
\caption{Simulation results: averaged computational time of DKR in second.} \label{Tab1} \vspace{0.2cm}
\begin{tabular}{cccccccc}
  \hline \hline
  % after \\: \hline or \cline{col1-col2} \cline{col3-col4} ...
        & $m=$       & 20 & 50 & 100 & 200 & 500 & 1000   \\ \hline
Case (i)  & Ridge      & 27.6 & 1.91 & 0.26 & 0.04 & $<0.01$ & $< 0.01$ \\
        & LASSO      & 74.8  & 13.6 & 4.93 & 2.54 & 1.91 & 1.20  \\ \vspace{0.1cm}
        & SVR        & 0.94 & 0.25 & 0.09 & 0.04 & 0.02 & 0.01   \\
  \hline

Case (ii)  & Ridge      & 28.1 & 1.94 & 0.26 & 0.04 & $< 0.01$ & $< 0.01$ \\
        & LAD        & 112 & 15.2 & 2.67 & 0.76 & 0.23 & 0.16  \\
        & LAD-LASSO  & 104 & 17.4 & 2.31 & 0.59 & 0.17 & 0.08   \\
  \hline \hline
\end{tabular}

\end{table}

The simulation results are shown in Figures \ref{Fig3}-\ref{Fig4}, where the associated computational cost is given in Table \ref{Tab1}.
We observe that, when $m$ is moderate, the DKR approach performs quite well in achieving a low RMSE for all tested regression methods.
This partially justifies the feasibility result obtained in this work. In our setup, choosing $m \in (50, 100)$ seems to be the most attractive,
because the associated DKR estimator enjoys a strong generalization capability at a low computational cost.
Clearly, by using multiple machines, DKR retains almost the same computational cost as the standard non-distributive method using only $1/m$ of the data.
Meanwhile, with a moderate $m$, it significantly improves the resulting estimator over the single machine-based local output.
The framework of DKR therefore serves as a viable route for conducting efficient leaning for big data.

It should also be noted that the performance of DKR may deteriorate when $m$ is overly large. In Case (i) with $m=1000$, DKR does not help much in reducing the RMSE of the single-machine-based estimator. As discussed in Section \ref{Gbound}, this might be caused by the estimation bias and insufficient sample size for each local machine. In principle, a smaller $m$ helps to improve the effectiveness of DKR, but it also leads to a higher computational cost. In practice, one should conduct DKR with different choices of $m$ and select an appropriate value based on specific situations. It might be a good idea to set $m$ as the smallest value within the affordable computational budget.

DKR also inherits reasonable robustness against outliers from the associated local outputs. This is revealed by the low RMSE of $\bar{f}$ conducted on LAD and LAD-LASSO in Case (ii) with $m \leq 50$.

\subsection{Real data example}

We apply DKR to analyze a real world dataset, which contains 583,250 instances of Twitter discussions on topics related to new technology in 2013.
Each instance is described by $d=77$ features related to that discussion. It is of interest to predict the number of active discussions ($Y$) based on these features ($X$).
To facilitate the computing process, we include the instances with $Y \in  [20, 200]$ in our analysis, which leads to a training set with size $174, 507$ and a testing set with size $19, 390$.
We standardize each attribute of $X$ such that it has a zero mean and a unit standard deviation. Readers may refer to \emph{Buzz Data} on \emph{http://archive.ics.uci.edu/ml/datasets.html} for more detailed information about this dataset.

\begin{table}[h]
{\centering
\caption{RMSE for the analysis of Buzz data.} \label{Tab2} \vspace{0.2cm}
\begin{tabular}{cccccc}
  \hline \hline
  % after \\: \hline or \cline{col1-col2} \cline{col3-col4} ...
   $m=$       & 40 & 120 & 300 & 500 & 1000  \\ \hline
   Ridge      &  24.8 & 25.3 & 25.6 & 25.9 & 26.5   \\
   LASSO      &  24.9 & 25.3 & 25.6 & 26.0 & 26.4   \\
   LAD        &  25.1 & 25.4 & 25.9 & 26.0 & 26.3   \\
  \hline \hline
\end{tabular}

}
\end{table}

Similar to our simulation studies, we build $\mathcal{H}_{K}$ based on the Gaussian kernel (\ref{gaussian}) with $\tau=10$.
We set $m= (40,120,300,500,1000)$ and apply DKR to the training sample with Ridge, LASSO, and LAD. We summarize the analysis in term of RMSE based on the testing sample, which is shown in Table \ref{Tab2}. Like many other social media data, this dataset is known to be noisy and highly skewed. Thus, the results in Table \ref{Tab2} indicate the decent performance of DKR. In this example, we observe that the results are not very sensible to the choice of $m$. Thus, researchers may prefer a larger $m$ for the computational convenience.

\section{Conclusion}\label{sectionConclusion}

In this paper, we studied the distributed kernel regression for learning with big data.
DKR follows from a divide-and-conquer framework, which enables distributive storage and parallel computing.
In DKR, the performance of the global estimator is determined by a uniform bound over the distributed local estimates.
Under mild conditions, we show that DKR provides a consistent estimate that leads to the oracle generalization risk.
Our results offer a general theoretical support for DKR, which is applicable to a broad range of regression methods.
As the first step, the current work focus only on the feasibility of DKR.
It would be important to further investigate its efficiency and develop the corresponding acceleration methods.
Also, it is promising to extend the current distributive framework to other learning tasks, such as classification and variable selection.
We leave all these interesting topics for the future research.

\section*{Acknowledgment}
This work is supported in part by NIDA grants P50 DA10075, P50 DA036107, and the Natural Science Foundation of China grant 11301494.
The authors are grateful to Dr. Xiangyu Chang at Xi'an Jiaotong University (China) and Dr. Jian Fang at Tulane University for their constructive suggestions to this work.

%\bibliographystyle{unsrt}
%\bibliography{dGKR.bib}

\end{document}